%% file: main.tex
\documentclass[5p,twocolumn]{elsarticle}

\usepackage{amssymb}
\usepackage{hyperref}
\usepackage[switch]{lineno}
\usepackage{amsmath}
\usepackage[utf8]{inputenc}
\usepackage{xcolor}
\usepackage{graphicx}
\usepackage{subcaption}
\usepackage{amsthm}
\usepackage{multirow}
\usepackage{booktabs}
\usepackage{tikz}
\usetikzlibrary{patterns,arrows.meta,decorations.pathreplacing}
\usepackage[ruled,vlined,linesnumbered]{algorithm2e}
\usepackage{url}

\newtheorem{theorem}{Theorem}
\newtheorem{proposition}{Proposition}

\newcommand{\BG}{\texttt{BG}}
\newcommand{\FG}{\texttt{FG}}
\newcommand{\VIS}{\texttt{VISITED}}
\newcommand{\EAST}{\texttt{EAST}}
\newcommand{\minrow}{\operatorname{minrow}}
\DeclareMathOperator{\comp}{comp}

\journal{ArXiv}

\begin{document}

\begin{frontmatter}

% TODO: title to be decided with the authors (see candidates in the summary).
\title{TRUCO: Deterministic and Data-Race-Free Parallel Contour Tracing through Geometric Contour Ownership}

\author[address_uco]{Rafael~Mu\~noz-Salinas\corref{cor1}}
\ead{in1musar@uco.es}

\author[address_uco2]{Francisco J.~Romero-Ramirez}
\ead{fj.romero@uco.es}

\author[address_uco]{Manuel J.~Mar\'in-Jim\'enez}
\ead{mjmarin@uco.es}

\cortext[cor1]{Corresponding author}

\address[address_uco]{Department of Computing and Numerical Analysis, C\'ordoba University, Spain}
\address[address_uco2]{Department of Electronics and Computer Engineering, C\'ordoba University, Spain}

\begin{abstract}
Extracting closed pixel contours from a binary image, together with the hierarchy that records which contours enclose which, is a basic step in many computer vision pipelines, from fiducial marker detection to the vectorization of segmentation masks. The reference algorithm, by Suzuki and Abe, is inherently sequential: each contour is found by a raster scan that depends on the labels left by all previous contours, and its parent is obtained from the last contour met by the scan. This paper presents TRUCO (Threaded Raster Unrestricted Contour Ownership), a multi-core algorithm that returns exactly the output of Suzuki--Abe (the same contours, in the same order, with the same hierarchy) without locks, read-modify-write operations or a stitching phase. The method rests on two ideas. First, the image is divided into horizontal stripes, one per thread, and two rules, row-local writes and geometric contour ownership, let each thread trace complete contours independently of the others. Second, the tracing of the contours is separated from the computation of the hierarchy, which is recovered afterwards from one reference per contour; the trace therefore stores no contour numbers in the label image, which also makes it more efficient than that of the original algorithm when a single thread is used. We prove that the output is deterministic and identical to that of Suzuki--Abe, and that the algorithm is free of data races. Experiments on synthetic images, high-resolution photographs and segmentation masks, on a desktop and a server CPU, show that TRUCO scales with the number of threads and is several times faster than the fastest sequential methods, with and without the hierarchy. With a single thread, it is already faster than Suzuki--Abe.
\end{abstract}

\begin{keyword}
Contour tracing \sep Contour hierarchy \sep Parallel algorithms \sep Determinism \sep Data-race freedom \sep Suzuki--Abe
\end{keyword}

\end{frontmatter}

%\linenumbers

%%%%%%%%%%%%%%%%%%%%%%%%%%%%%%%%%%%%%%%%%%%%%%%%%%%%%%%%%%%%%%%%%%%%%%%%%%%%%
\section{Introduction}
\label{sec:intro}
%%%%%%%%%%%%%%%%%%%%%%%%%%%%%%%%%%%%%%%%%%%%%%%%%%%%%%%%%%%%%%%%%%%%%%%%%%%%%

Binary contour extraction converts a binary image into the closed sequences of pixels that bound its connected components and their holes. It is a basic operation in fiducial marker detection \cite{ROMERORAMIREZ201838} and in industrial machine-vision workflows \cite{DiazCano2025MVS}, and it is also the usual way of turning the dense masks produced by modern segmentation models into polygons. Examples include contour-based instance segmentation \cite{Peng2020DeepSnake}, boundary-aware evaluation metrics \cite{Cheng2021BoundaryIoU}, the polygon annotations of large datasets \cite{Lin2014COCO} and the polygonal mapping of buildings from segmentation masks \cite{Xu2023HiSup,Zhang2025GCP}. In many of these tasks the contours alone are not enough: it is also necessary to know which contours are holes of which objects, and which objects lie inside which holes.  

The reference method is the algorithm of Suzuki and Abe \cite{SUZUKI198532}, which computes both the contours and their hierarchy and is the basis of the contour extraction functions of widely used image processing libraries \cite{bradski2000opencv}. With current image resolutions (4K and above) and the large number of masks produced per image, contour extraction can become a noticeable part of the processing time. However, the algorithm is sequential in two ways. First, contours are found by a raster scan that depends on the labels left by all the contours traced before, and each step of contour tracing depends on the previous one. Second, the parent of a new contour is obtained from the last contour met by the scan on the same row, identified by the number that the scan assigns to every contour.

A parallel version must therefore solve two problems. A contour is a global object that may cross any partition of the image, so every contour must be produced once and only once, without making the result depend on the relative speed of the threads. And the hierarchy must be recovered although the contour that determines the parent of a contour may have been traced by another thread, and although threads cannot share a global contour numbering without synchronizing. Existing parallel approaches split the image into tiles and stitch partial contours afterwards \cite{Garcia_2022,Garcia_2023,AlonsoJorda2026GPU}, which adds a merging phase whose cost grows with the number of contours crossing tile boundaries, or rely on hardware-specific architectures \cite{Chen1993,CHIA2003219,Gupta_2022,Gupta2024FPGA} or GPUs \cite{yin2025g}.

This paper presents TRUCO, a parallel algorithm for multi-core CPUs that solves both problems without stitching. The image is divided into horizontal stripes, one per thread, and each thread scans its stripe and traces the contours it finds, following them freely into the stripes below. Two simple rules make this safe. First, \emph{row-local writes}: a thread writes only on its own rows, and reads the rows of other threads only to distinguish background from foreground, which never changes; hence no thread depends on the progress of another, and the result does not depend on scheduling. Second, \emph{geometric ownership}: each contour is owned by the stripe that contains its top row, so ownership is decided locally, and a thread that sees a contour rise above its stripe simply abandons it. The hierarchy, which Suzuki--Abe obtains from the contour numbers stored in the label image, is recovered afterwards: each contour records a reference to a contour met during the scan of its start row, and a parallel pass over the traced contours turns these references into parents with the same rule as the sequential algorithm.

This additional pass, which Suzuki--Abe does not need, makes the method modular: applications that only need the contours, such as marker detection or the vectorization of masks without holes, simply skip it. When the hierarchy is required, the pass works on the contour points rather than on the image and is itself parallel, so the complete method remains faster than the sequential algorithm (Section~\ref{sec::experiments}).

The contributions of this paper are:
\begin{itemize}
  \item A stripe-parallel contour-tracing scheme, based on row-local writes and geometric ownership, that needs no stitching phase, no locks and no read-modify-write operations.
  \item A computation of the Suzuki--Abe hierarchy that is separated from the trace: the contours are traced with four fixed marks in a one-byte label image, and the hierarchy is reconstructed afterwards from per-contour references, without contour numbers. This separation is what allows the threads to share the label image without knowing the contours found by the other stripes, and it is advantageous by itself, independently of the parallel decomposition, since it also benefits the execution with a single thread.
  \item Proofs that the output is deterministic and identical to that of Suzuki--Abe, and that the algorithm is free of data races under the memory models of C, C++ and similar languages.
  \item An experimental evaluation on two CPUs and an open-source implementation\footnote{\url{https://github.com/rmsalinas/truco}}.% NOTE: change to OpenCV 4.15 if the PR is accepted
\end{itemize}

The remainder of the paper is organized as follows. Section~\ref{sec:related_work} reviews related work. Section~\ref{sec:suzuki_abe} summarizes the Suzuki--Abe algorithm, including the computation of the hierarchy. Section~\ref{sec:ourmethod} describes TRUCO and discusses its execution on shared memory. Section~\ref{sec::experiments} presents and discusses the experimental evaluation, and Section~\ref{sec:conclusions} concludes. \ref{app:proofs} proves that the output of TRUCO is deterministic and identical to that of Suzuki--Abe.

%%%%%%%%%%%%%%%%%%%%%%%%%%%%%%%%%%%%%%%%%%%%%%%%%%%%%%%%%%%%%%%%%%%%%%%%%%%%%
\section{Related Work}
\label{sec:related_work}
%%%%%%%%%%%%%%%%%%%%%%%%%%%%%%%%%%%%%%%%%%%%%%%%%%%%%%%%%%%%%%%%%%%%%%%%%%%%%

\subsection{Sequential Contour Tracing}
Early contour extraction methods follow the boundary of a region by local neighbourhood analysis \cite{Ren_2002}. Square tracing decides each turn from the state of the current pixel only and fails on some 8-connected configurations. Moore-neighbour tracing examines the eight neighbours of the current pixel in a fixed rotational order and, combined with a proper stopping criterion, follows any 8-connected contour. Seo et al.~\cite{seo_2016} reduce the number of neighbour inspections by classifying contour pixels into straight segments and corners. All these methods are sequential: the $(n+1)$-th contour pixel depends on the $n$-th, and the cost is linear in the length of the contour. Recent work on contour processing has focused on derived problems rather than on the tracing step itself, such as computing the contours of all the nodes of a component tree incrementally \cite{Silva2025ComponentTree} or storing large binary masks as chain codes and reconstructing them in parallel \cite{Strnad2024ChainCodes}.

\subsection{Contour Hierarchy and Connected-Component Labelling}
Suzuki and Abe \cite{SUZUKI198532} combine a raster scan with contour tracing to retrieve all contours of a binary image, outer contours and hole contours, together with their containment hierarchy, which they obtain from the contour numbers stored in an integer label image. It is the algorithm used by most image processing libraries \cite{bradski2000opencv}. Chang et al.~\cite{CHANG_2004} unify connected-component labelling and contour tracing in a single pass. Connected-component labelling (CCL) is closely related: it identifies the same 8-connected components whose outer contours are traced here, and a large body of work has optimized it with decision trees and block-based scans \cite{Grana_2010,Bolelli2020Spaghetti,Bolelli2022OneDAG}, run-based scans that also handle holes \cite{Lemaitre2022Holes,Boiangiu2025CCL}, and dedicated benchmarks \cite{Grana2016YACCLAB}. CCL produces a label image, however, not ordered contour pixel sequences or a hierarchy, and obtaining them from the labels still requires a contour-tracing pass and an analysis of the holes.

\subsection{Run-Length Encoding Approaches}
Run-based methods process maximal horizontal runs of foreground pixels instead of individual pixels \cite{KIM1988114,QUEK20001637}. Run-length representations are used for region labelling \cite{HE201725}, and early boundary-specific methods \cite{CAPSON1984109,CEDERBERG1979224} showed that contours can be extracted from raster-ordered runs in a single pass. The link-run approach builds contours by examining the overlap between runs of adjacent rows \cite{HE20091977}. These methods are fast on images with long runs, but the linking phase is sequential and their cost grows quickly on images with many short runs.

\subsection{Parallel Contour Extraction and Labelling}
Chen et al.~\cite{Chen1993} gave $O(\log N)$ PRAM algorithms to rank and linearize contour pixels, and Chia et al.~\cite{CHIA2003219} proposed a systolic architecture that generates chain codes in linear time. Gupta and Kar \cite{Gupta_2022} proposed contour tracing algorithms for torus-connected multiprocessors, and later mapped them onto a mesh of processing elements on an FPGA \cite{Gupta2024FPGA}. Garcia-Molla et al.\ parallelized Suzuki--Abe contour tracing on GPUs \cite{Garcia_2022} and on multi-core CPUs \cite{Garcia_2023} by partitioning the image into rectangular tiles, tracing contours inside each tile and joining the partial contours that cross tile boundaries in a later phase. Neither method computes the hierarchy, and the contours they return are not the same as those of Suzuki--Abe. Alonso-Jord\'a et al.~\cite{AlonsoJorda2026GPU} recently replaced the per-tile sequential tracing of \cite{Garcia_2022} by a fully data-parallel GPU scheme: every foreground pixel emits local ``triads'' (previous, current and next contour pixel), which are linked first inside each block and then across blocks. Its main goal is to avoid CPU--GPU transfers when contour extraction is embedded in a GPU inspection workflow \cite{DiazCano2025MVS}, and its run time grows with the number of contours and contour pixels. Yin et al.~\cite{yin2025g} proposed G-Contour, a GPU method for large VLSI layouts that first identifies objects with a parallel union--find structure and then extracts their contours. For connected components, parallel algorithms exist both for multi-core CPUs \cite{Cabaret2018} and for GPUs \cite{Playne2018,Allegretti2020GPU}, and a recent review with code compares the GPU approaches on a common benchmark \cite{Bolelli2026GPUReview}. In all these works, the main obstacle to parallel scaling is the resolution of equivalences between partitions, the counterpart of contour stitching.

TRUCO differs from tiling and triad-linking approaches \cite{Garcia_2022,Garcia_2023,AlonsoJorda2026GPU} in that no partial contour is ever produced: every contour is traced completely, by a single thread, in a single pass, so there is no merging phase, and the output, including the hierarchy, is exactly that of Suzuki--Abe. It differs from union--find and CCL-based approaches in that it requires no global data structure during tracing: the only shared object is the label image, and the hierarchy is recovered afterwards from the traced contours themselves. Finally, TRUCO targets multi-core CPUs, not GPUs: among the parallel contour tracers above, its direct counterpart is the multi-core method of \cite{Garcia_2023}, which is discussed in Section~\ref{sec:setup}.

\subsection{Contour Extraction in Learning-Based Pipelines}
The conversion of the dense masks of segmentation networks into polygons (Section~\ref{sec:intro}) relies on classical contour tracing, usually Suzuki--Abe, and needs the hierarchy whenever masks contain holes or nested regions. For instance, HiSup \cite{Xu2023HiSup} traces every predicted building mask with Suzuki--Abe before simplifying it, and the Global Collinearity-aware Polygonizer \cite{Zhang2025GCP} samples its initial polylines along the contours of the masks. Since tracing is applied to every instance of every image, its cost adds directly to the inference time. The output of TRUCO, including the hierarchy, is identical to that of Suzuki--Abe, so it can replace it without changing any downstream code. Section~\ref{sec:inria} evaluates TRUCO on the building masks of the Inria Aerial Image Labeling dataset, the kind of input that these methods polygonize.

%%%%%%%%%%%%%%%%%%%%%%%%%%%%%%%%%%%%%%%%%%%%%%%%%%%%%%%%%%%%%%%%%%%%%%%%%%%%%
\section{The Suzuki--Abe Algorithm}
\label{sec:suzuki_abe}
%%%%%%%%%%%%%%%%%%%%%%%%%%%%%%%%%%%%%%%%%%%%%%%%%%%%%%%%%%%%%%%%%%%%%%%%%%%%%

This section summarizes the Suzuki--Abe algorithm \cite{SUZUKI198532}, including the computation of the contour hierarchy, and fixes the definitions used in the rest of the paper.

\paragraph{Image, connectivity and contours}
Let $B$ be a binary image of $H\times W$ pixels, surrounded by a one-pixel frame of background, so that rows and columns are indexed $0,\dots,H+1$ and $0,\dots,W+1$. A pixel is \emph{foreground} if its value is non-zero and \emph{background} otherwise. As in \cite{SUZUKI198532}, foreground pixels are 8-connected and background pixels are 4-connected \cite{Rosenfeld1970}. A \emph{1-component} is a maximal 8-connected set of foreground pixels; a \emph{0-component} is a maximal 4-connected set of background pixels. The 0-component that contains the frame is the \emph{background}; any other 0-component is a \emph{hole}. For a pixel $q$, $\comp(q)$ denotes the component that contains it. Two components are \emph{adjacent} if they contain two 4-adjacent pixels. Following \cite{SUZUKI198532}, a component $S_2$ \emph{surrounds} a component $S_1$ if every 4-path from a pixel of $S_1$ to the frame contains a pixel of $S_2$. Every 1-component $A$ is adjacent to exactly one 0-component that \emph{surrounds} it, and every hole $Z$ is surrounded by exactly one 1-component; any other 0-component adjacent to $A$ is a hole surrounded by $A$ \cite{Rosenfeld1970,SUZUKI198532}. For a 1-component $A$ and an adjacent 0-component $Z$, the \emph{contour} $C(A,Z)$ between them is formed by the pixels of $A$ that are 4-adjacent to some pixel of $Z$; contour tracing visits them as a closed sequence, the \emph{pixel sequence} of the contour, in which a pixel may appear more than once. It is an \emph{outer contour} if $Z$ surrounds $A$ and a \emph{hole contour} if $Z$ is a hole surrounded by $A$. We write $C_A$ for the outer contour of $A$ and $C_Z$ for the hole contour of $Z$.

\paragraph{Raster order and start pixels}
Pixels are ordered in raster order: $(i,j)\prec(i',j')$ iff $i<i'$, or $i=i'$ and $j<j'$. For a set of pixels $X$, $\minrow(X)$ is the smallest row index of a pixel in $X$. For a 1-component $A$, its \emph{outer start} $p_A$ is its first pixel in raster order. For a hole $Z$, let $h_Z$ be its first pixel in raster order; its \emph{hole start} $p_Z$ is the left neighbour of $h_Z$, which is necessarily foreground. Two elementary geometric properties of the start pixels are used throughout the paper:
\begin{itemize}
  \item[(P0)] \emph{Surrounding component.} Let $p_A$ be the outer start of a 1-component $A$. Its west, north-west and north neighbours are background and belong to the 0-component that surrounds $A$.
  \item[(P1)] \emph{Top rows.} $\minrow(C_A)=\mathrm{row}(p_A)$ and $\minrow(C_Z)=\mathrm{row}(p_Z)-1$. Moreover, the north-east neighbour of $p_Z$ is foreground, and no pixel is both an outer start and a hole start.
\end{itemize}
For (P0), these neighbours are background because they precede $p_A$ and are 8-adjacent to it, and they are 4-connected, so they lie in one 0-component $Z'$; the vertical 4-path from the north neighbour of $p_A$ to the frame avoids $A$, so $Z'$ is not a hole of $A$ and therefore surrounds it.
For (P1), no pixel of $A$ lies above $p_A$, and $p_A\in C_A$ by (P0). For a hole $Z$ surrounded by $A$, the pixel $u$ above $h_Z$ precedes $h_Z$ and is 4-adjacent to it, so it is foreground, and it does not belong to a 1-component $A'$ surrounded by $Z$, since by (P0) the pixel above $p_{A'}$ would be a pixel of $Z$ above $h_Z$. Hence $u\in C_Z$, which gives $\minrow(C_Z)=\minrow(Z)-1$, and $u$ is the north-east neighbour of $p_Z$. In particular $p_Z$ is not the outer start of $A$, so no pixel is both an outer and a hole start.
Contours are ordered by their start pixels in raster order; we call this the \emph{scan order}. It is a total order, because no pixel is both an outer start and a hole start (P1).

\paragraph{Hierarchy}
The hierarchy is the tree defined by the surround relation \cite{SUZUKI198532}. The parent of an outer contour $C_A$ is the hole contour $C_Z$ of the hole $Z$ that surrounds $A$, or none if $A$ is surrounded by the background; the parent of a hole contour $C_Z$ is the outer contour $C_A$ of the 1-component $A$ that surrounds $Z$. We write $\pi(C)$ for the parent of $C$ and $\pi(C)=\bot$ for a root. In the rule of Table~\ref{tab:parentrule} below, the frame plays the role of a hole contour, identified with $\bot$.

\paragraph{The sequential algorithm}
Suzuki--Abe, summarized in Algorithm~\ref{alg:suzuki}, scans $B$ in raster order. It maintains a label image initialized to $B$ (foreground $=1$), a counter $NBD$ that numbers the contours found, and a variable $LNBD$, the number of the last contour met during the scan of the current row; at the beginning of each row $LNBD$ refers to the frame, which is treated as a hole contour. At each pixel $(i,j)$ it applies two tests:
\begin{enumerate}
  \item \textbf{Outer start:} $B_{i,j}=1$ (unvisited foreground) and $(i,j-1)$ is background. The contour between $\comp((i,j))$ and $\comp((i,j-1))$ is traced, starting from $(i,j)$ with $(i,j-1)$ as the first examined neighbour.
  \item \textbf{Hole start:} $B_{i,j}\ge 1$ (foreground, not marked as east) and $(i,j+1)$ is background. The contour between $\comp((i,j))$ and $\comp((i,j+1))$ is traced, starting with $(i,j+1)$.
\end{enumerate}
The first condition is checked first: a pixel that satisfies it is not tested for a hole start, which loses no contour because no pixel is both an outer and a hole start (P1).

Contour tracing (called border following in \cite{SUZUKI198532}) is Moore-neighbour tracing. At each visit of a pixel $p$, its neighbours are examined counter-clockwise until a foreground pixel is found, which is the next contour pixel. The search starts at $w$ at the first visit of the start pixel and, at any other visit, at the neighbour that follows the previous contour pixel in counter-clockwise order \cite{SUZUKI198532}. With directions numbered counter-clockwise from $\mathrm{E}=0$, a visit entered with direction $d_{in}$ and left with direction $d_{out}$ therefore examines the background neighbours in the cyclic range of directions $[d_{in}+5,\,d_{out})$. We use the equivalent formulation in which the trace stops when it returns to the start pixel and is about to repeat its first move, so the start pixel is visited twice. The search of the closing visit starts at the neighbour that follows the last contour pixel, exactly as the clockwise search of the start pixel in \cite{SUZUKI198532}, so the two formulations examine the same neighbours and produce the same labels. While tracing, each visited pixel is labelled: with $-NBD$ if its eastern neighbour is a background pixel examined during the neighbourhood search (an \emph{east} pixel), otherwise with $NBD$ if it was still unvisited. The negative label prevents the scan from starting a hole contour at the rightmost pixels of a contour that has already been traced. The parent of a new contour $C$ is obtained from the contour $D$ numbered $LNBD$ with the rule of Table~\ref{tab:parentrule}: $\pi(C)=D$ if $C$ and $D$ are of different types, and $\pi(C)=\pi(D)$ otherwise.

\begin{table}[t!]
\centering
\caption{Parent of a new contour $C$ given the contour $D$ met immediately before its start pixel \cite{SUZUKI198532}.}
\label{tab:parentrule}
\begin{tabular}{lcc}
\toprule
 & $D$ outer & $D$ hole \\
\midrule
$C$ outer & $\pi(D)$ & $D$ \\
$C$ hole  & $D$ & $\pi(D)$ \\
\bottomrule
\end{tabular}
\end{table}

\paragraph{Properties of the sequential algorithm}
TRUCO also relies on the following properties of the sequential algorithm, which are established in \cite{SUZUKI198532} or follow directly from the tracing procedure.
\begin{itemize}
  \item[(P2)] \emph{Contour tracing.} Let $q\in A$ and let $w\in Z$ be the left or the right neighbour of $q$, which is background. Contour tracing started at $q$, with $w$ as the first examined neighbour, terminates and visits exactly the pixels of $C(A,Z)$. It examines only background pixels of $Z$, and every pixel of $Z$ that is 4-adjacent to a pixel $q'$ of $C(A,Z)$ is examined during some visit of $q'$. The neighbours examined during a visit are determined by the directions in which the pixel is entered and left. The path of the tracer depends only on which pixels are background, not on the labels of foreground pixels.
  \item[(P3)] \emph{East labels.} A trace of $C(A,Z)$, complete or not, labels a pixel as east only if its eastern neighbour belongs to $Z$, and a complete trace labels as east every pixel of $A$ whose eastern neighbour belongs to $Z$.
  \item[(P4)] \emph{Output.} The algorithm returns exactly one contour per 1-component $A$, the outer contour $C_A$ traced from $p_A$, and one contour per hole $Z$, the hole contour $C_Z$ traced from $p_Z$, in scan order, together with the hierarchy $\pi$.
\end{itemize}

\begin{algorithm}[!tbp]
\renewcommand{\baselinestretch}{0.9}\selectfont
\SetAlgoLined
\DontPrintSemicolon
\KwIn{Binary image $B$ with background frame}
\KwOut{Contours $\mathcal{C}$ in scan order and parent map $\pi$}
 $\mathcal{C} \gets$ empty sequence;\ $NBD \gets 1$ \tcp*[r]{contour 1 is the frame ($\bot$)}
 \For{$i \gets 1$ \KwTo $H$}{
  $LNBD \gets 1$\;
  \For{$j \gets 1$ \KwTo $W$}{
    \If(\tcp*[f]{outer start}){$B_{i, j-1} = 0$ \textbf{and} $B_{i,j} = 1$}{
     $NBD \gets NBD + 1$;\ $c \gets \textbf{TraceContour}((i, j), (i, j-1), NBD)$\;
     $\pi(c)\gets$ rule of Table~\ref{tab:parentrule} with $D=LNBD$;\ append $c$ to $\mathcal{C}$\;
    }
    \ElseIf(\tcp*[f]{hole start}){$B_{i,j} \ge 1$ \textbf{and} $B_{i, j+1} = 0$}{
     \lIf{$B_{i,j}>1$}{$LNBD \gets B_{i,j}$}
     $NBD \gets NBD + 1$;\ $c \gets \textbf{TraceContour}((i, j), (i, j+1), NBD)$\;
     $\pi(c)\gets$ rule of Table~\ref{tab:parentrule} with $D=LNBD$;\ append $c$ to $\mathcal{C}$\;
    }
    \lIf{$B_{i,j}\notin\{0,1\}$}{$LNBD \gets |B_{i,j}|$}
  }
 }
\SetKwRepeat{Do}{do}{while}
\SetKwProg{Fn}{Function}{}{end}
 \Fn{\textbf{TraceContour}($p, w, NBD$)}{
    $c \gets$ empty sequence\;
    \Do{the stopping criterion is not met}{
        append $p$ to $c$\;
        $p' \gets$ next contour pixel: first foreground neighbour of $p$ in counter-clockwise order, starting at $w$ at the first visit and otherwise at the neighbour that follows the previous contour pixel\;
        \uIf{the eastern neighbour of $p$ is background and was examined}{ $B_{p} \gets -NBD$\; }
        \ElseIf{$B_{p} = 1$}{ $B_{p} \gets NBD$\; }
        $p \gets p'$\;
    }
    \Return $c$\;
 }
 \caption{Suzuki--Abe contour tracing with hierarchy (simplified from \cite{SUZUKI198532}).}
 \label{alg:suzuki}
\end{algorithm}

%%%%%%%%%%%%%%%%%%%%%%%%%%%%%%%%%%%%%%%%%%%%%%%%%%%%%%%%%%%%%%%%%%%%%%%%%%%%%
\section{The TRUCO Algorithm}
\label{sec:ourmethod}
%%%%%%%%%%%%%%%%%%%%%%%%%%%%%%%%%%%%%%%%%%%%%%%%%%%%%%%%%%%%%%%%%%%%%%%%%%%%%

TRUCO computes the same output as Suzuki--Abe (Section~\ref{sec:suzuki_abe}), the contours in scan order and their hierarchy, in three stages:
\begin{enumerate}
  \item \textbf{Parallel tracing} (Sections~\ref{sec:decomposition}--\ref{sec:rules}). The image is divided into horizontal stripes, and each thread finds and traces the contours that start in its stripe. Besides its pixel sequence, every contour $C$ receives a \emph{reference} $\rho(C)$: a pixel and a side that identify a contour $D$ from which the parent of $C$ follows with the rule of Suzuki--Abe.
  \item \textbf{Parallel reference resolution} (Section~\ref{sec:hierarchy}). The contour that contains each reference is found by a parallel pass over the points of all the traced contours.
  \item \textbf{Hierarchy assembly} (Section~\ref{sec:hierarchy}). The parent of every contour is obtained with the rule of Table~\ref{tab:parentrule}, and the contours are arranged in scan order.
\end{enumerate}
Stage~1 does not need contour identifiers in the label image, which uses one byte per pixel, and the threads never wait for each other. Stages~2 and~3 work only on the traced contours, not on the image.

Stages~2 and~3 are optional. When the hierarchy is not required, stage~1 alone returns the contours of Suzuki--Abe (Theorem~\ref{thm:main}), and stages~2 and~3 are skipped; if the scan order is also required, it is obtained by sorting only the contours that start on the rows shared by two stripes (Section~\ref{sec:hierarchy}). When the hierarchy is required, stage~2 adds a second pass over the contour points, which Suzuki--Abe does not need, but this pass is parallel and linear in the number of points, so it does not offset the gain of parallel tracing (Section~\ref{sec::experiments}).

For every number of stripes and every schedule, the output of the three stages is deterministic and identical to that of Suzuki--Abe. The formal statements and their proofs are given in \ref{app:proofs}.

\subsection{Domain Decomposition and State Space}
\label{sec:decomposition}

The $H$ inner rows of $B$ are divided into $K$ horizontal stripes of consecutive rows, $[s_k,e_k)$ for $k=0,\dots,K-1$, with $s_0=1$, $e_{K-1}=H+1$, $s_{k+1}=e_k$, and sizes that differ by at most one row, which requires $1\le K\le H$. $K$ is the number of worker threads, and the stripes are distributed among them by a parallel loop; any worker may process any stripe. In the following, ``thread $k$'' means the task that processes stripe $k$.

As in Suzuki--Abe, $B$ itself is used as the label image. It is the only object shared by all threads during stage~1, and its pixels take four values: \BG{}, \FG{}, \VIS{} and \EAST{}. The last two correspond to the positive and negative labels of Suzuki--Abe without the contour number, which is not needed because the parents are obtained from the references in stages~2 and~3. One byte per pixel suffices, instead of the integer label image of Suzuki--Abe. Two invariants hold during the whole execution:
\begin{itemize}
  \item \textbf{Background is never written.} Only foreground pixels receive marks, and both marks are different from \BG{}.
  \item \textbf{Marks are monotone.} A pixel can only change $\FG\to\VIS$, $\FG\to\EAST$ or $\VIS\to\EAST$.
\end{itemize}
Consequently, the background/foreground status of every pixel, which is the only information used to move the tracer (P2), is fixed during the whole execution, regardless of what threads write.

In addition, thread $k$ owns one private row $P_k$, which holds a copy of row $e_k$ (the first row of stripe $k+1$) taken when the thread starts, with every foreground value replaced by \FG{}.

\begin{figure}[t!]
\centering
\resizebox{\columnwidth}{!}{%
\begin{tikzpicture}[x=0.42cm,y=0.42cm,font=\scriptsize]
  \colorlet{tzero}{blue!70!black}
  \colorlet{tone}{orange!85!black}
  % stripes
  \fill[blue!10] (1,-1) rectangle (13,-5);
  \fill[orange!12] (1,-5) rectangle (13,-9);
  % foreground pixels
  \foreach \r in {3,4,7} \foreach \c in {7,...,11} \fill[gray!55] (\c,-\r) rectangle ++(1,-1);
  \foreach \r in {5,6} \foreach \c in {7,8,11} \fill[gray!55] (\c,-\r) rectangle ++(1,-1);
  \foreach \r in {6,7} \foreach \c in {2,3,4} \fill[gray!55] (\c,-\r) rectangle ++(1,-1);
  % row e_0, scanned by thread 0 in its private copy P_0
  \fill[pattern=north east lines, pattern color=tzero!35] (1,-5) rectangle (13,-6);
  \draw[gray!70!black, very thin, step={(1,1)}] (0,0) grid (14,-10);
  \draw[thick] (0,-1) -- (14,-1) (0,-5) -- (14,-5) (0,-9) -- (14,-9);
  % indices
  \foreach \c in {0,...,13} \node[above] at (\c+0.5,0) {\c};
  \foreach \r in {0,2,3,4,6,7,8} \node[left] at (0,-\r-0.5) {\r};
  \node[left] at (0,-1.5) {$s_0{=}1$};
  \node[left] at (0,-5.5) {$e_0{=}s_1{=}5$};
  \node[left] at (0,-9.5) {$e_1{=}9$};
  % accepted start pixels (filled: outer, open: hole) and aborted attempts (crosses)
  \fill[tzero] (7.5,-3.5) circle (0.25);
  \draw[tzero, very thick, fill=white] (8.5,-5.5) circle (0.25);
  \fill[tone] (2.5,-6.5) circle (0.25);
  \foreach \c in {7,11}
    \draw[tone, very thick] (\c+0.25,-5.25) -- (\c+0.75,-5.75) (\c+0.25,-5.75) -- (\c+0.75,-5.25);
  % stripes
  \draw[decorate, decoration={brace, amplitude=4pt}] (14.3,-1) -- (14.3,-5)
     node[midway, right=5pt, align=left] {stripe 0\\(thread 0)};
  \draw[decorate, decoration={brace, amplitude=4pt}] (14.3,-5.05) -- (14.3,-9)
     node[midway, right=5pt, align=left] {stripe 1\\(thread 1)};
\end{tikzpicture}}
\caption{Domain decomposition with $K=2$ on an $8\times 12$ image with its frame (foreground in grey). Thread $0$ writes marks on rows $1$--$4$ and on its private copy $P_0$ of row $e_0=5$ (hatched), where it makes hole attempts only; thread $1$ writes marks on rows $5$--$8$. Filled and open circles mark accepted outer and hole starts, coloured by owner; crosses mark aborted attempts.}
\label{fig:stripes}
\end{figure}

\subsection{Row-Local Writes and Geometric Ownership}
\label{sec:rules}

Thread $k$ scans rows $s_k$ to $e_k$ in raster order and starts traces with the tests of Suzuki--Abe, restricted to particular rows and evaluated on its own marks. Unlike Algorithm~\ref{alg:suzuki}, the hole test is applied even at a pixel that has just started an outer trace; since no pixel is both an outer and a hole start (P1), this does not change the set of start pixels. While tracing, it may move to any row $\ge s_k$, including rows of all the following stripes. Its behaviour is governed by two rules.

\paragraph{Rule 1: row-local writes}
Thread $k$ writes marks only on rows $[s_k,e_k)$ of $B$ and on its private row $P_k$, which stands for row $e_k$. When a trace leaves a pixel on row $e_k$, the mark is written in $P_k$; when it leaves a pixel on a row $>e_k$, no mark is written. The tests that start traces read marks only on rows $[s_k,e_k]$, that is, only marks written by thread $k$. Pixels of other rows are read only to decide whether they are background, which is invariant (Section~\ref{sec:decomposition}).

\paragraph{Rule 2: geometric ownership}
\begin{itemize}
  \item Outer attempts are made on rows $[s_k,e_k)$, at pixels $q$ with $M(q)=\FG$ whose left neighbour is background (as in Suzuki--Abe).
  \item Hole attempts are made on rows $(s_k,e_k]$, at pixels $q$ whose right neighbour is background, whose mark is not \EAST{}, and whose north-east neighbour is foreground. The last condition is a cheap necessary condition (P1) that discards most candidates before tracing.
  \item An outer trace started on row $r$ is \emph{accepted} only if it is completed and its top row is $r$; a hole trace started on row $r$ is accepted only if it is completed and its top row is $r-1$.
  \item A trace that moves to a row $<s_k$ is aborted immediately. This does not change the decision on that trace, which could not be accepted anyway, since accepted outer traces have top row $r\ge s_k$ and accepted hole traces have top row $r-1\ge s_k$.
\end{itemize}
Here $M(q)$ denotes the mark of $q$ as seen by thread $k$: its value in $B$ for rows $<e_k$ and in $P_k$ for row $e_k$. Aborted and rejected traces leave their marks in place; \ref{app:proofs} shows that these marks never prevent a correct start.

With these rules, a contour $C$ is accepted only by the thread whose stripe contains $\minrow(C)$. Outer contours start on their top row (P1). Hole contours start one row below their top row (P1), which is why thread $k$ also scans the first row of the next stripe, for holes only, and why it does not make hole attempts on its own first row $s_k$: those holes have their top row in stripe $k-1$. 

Figure~\ref{fig:stripes} illustrates both rules with $K=2$. The component on the right has its top row in stripe $0$, so thread $0$ owns both of its contours: it accepts the outer contour from $(3,7)$, although the contour extends to row $7$, and the hole contour, whose top row is $4$, from $p_Z=(5,8)$, which it scans in $P_0$. Thread $1$ also starts outer traces at $(5,7)$ and $(5,11)$, on the outer and on the hole contour of that component, but both move up to row $4<s_1$ and are aborted. The component on the left lies entirely in stripe $1$, and thread $1$ accepts its outer contour from $(6,2)$.

\paragraph{References}
When a contour $C$ is accepted, the thread records, besides its pixel sequence, its start pixel, its \emph{type} (outer or hole, given by the test that started the trace) and its \emph{reference} $\rho(C)$. The reference is a pair $\rho(C)=(q,\sigma)$, formed by a pixel $q$ of the start row and a side $\sigma\in\{\mathrm{W},\mathrm{E}\}$ of $q$ whose neighbour is background, and it identifies a contour met before the start pixel on that row. All three are known from the scan of the start row $r$, which is processed as a sequence of \emph{runs}, that is, maximal sequences of consecutive foreground pixels of the row, bounded by background on both sides:
\begin{itemize}
  \item For an outer contour started at the first pixel $(r,a)$ of a run, let $(r,b)$ be the last pixel of the previous run of row $r$. Then $\rho(C)=((r,b),\mathrm{E})$; if there is no previous run, $\rho(C)=\bot$.
  \item For a hole contour started at the last pixel $(r,b)$ of a run, let $(r,a)$ be the first pixel of that run. Then $\rho(C)=((r,a),\mathrm{W})$.
\end{itemize}
In both cases the neighbour of the reference pixel on the indicated side is background. Figure~\ref{fig:references} shows the references of a small image; the contours they identify and the parents obtained from them are discussed in Section~\ref{sec:hierarchy}. Algorithm~\ref{alg:parallel_worker} gives the complete procedure of stage~1.

\begin{figure}[t!]
\centering
\begin{tikzpicture}[x=0.6cm,y=0.6cm,font=\footnotesize]
  \foreach \r/\c in {1/1,1/2,1/3,1/4,1/5,1/6,1/7,2/1,2/7,3/1,3/3,3/5,3/7,4/1,4/7,5/1,5/2,5/3,5/4,5/5,5/6,5/7}
    \fill[gray!45] (\c,-\r) rectangle ++(1,-1);
  \draw[gray!70!black, very thin, step={(1,1)}] (0,0) grid (9,-7);
  \foreach \c in {0,...,8} \node[above] at (\c+0.5,0) {\c};
  \foreach \r in {0,...,6} \node[left] at (0,-\r-0.5) {\r};
  \node at (4.5,-1.5) {$A$};
  \node at (4.5,-2.5) {$Z$};
  \node at (3.5,-3.5) {$E$};
  \node at (5.5,-3.5) {$E'$};
  % references: side of the reference pixel, and the examined background neighbour w
  \draw[line width=2pt, red!80!black] (1,-2) -- (1,-3);
  \node[red!80!black] at (0.5,-2.5) {$w_Z$};
  \draw[line width=2pt, blue!80!black] (2,-3) -- (2,-4);
  \node[blue!80!black] at (2.5,-3.5) {$w_E$};
  \draw[line width=2pt, green!45!black] (4,-3) -- (4,-4);
  \node[green!45!black] at (4.5,-3.5) {$w_{E'}$};
\end{tikzpicture}
\caption{References in a $5\times 7$ image with its frame (foreground in grey). $A$ has a hole $Z$ that contains the single-pixel components $E$ and $E'$. Thick segments mark the references $\rho(C_Z)$, $\rho(C_E)$ and $\rho(C_{E'})$, and $w$ marks the background neighbour on the indicated side. The resulting parents are $\pi(C_A)=\bot$, $\pi(C_Z)=C_A$ and $\pi(C_E)=\pi(C_{E'})=C_Z$ (see the example in Section~\ref{sec:hierarchy}).}
\label{fig:references}
\end{figure}

\begin{algorithm}[!tbp]
\renewcommand{\baselinestretch}{0.88}\selectfont
\SetAlgoLined
\DontPrintSemicolon
\SetKwProg{Fn}{Function}{}{end}
\KwIn{Shared label image $B$; stripe $[s,e)$; private row $P$}
\KwOut{List $\mathcal{C}_k$ of contours, each with its start pixel, type and reference}
 \For(\tcp*[f]{private copy of row $e$}){$x \gets 0$ \KwTo $W+1$}{ $P_x \gets (\mathrm{ld}(B_{e,x})\neq \BG)\ ?\ \FG : \BG$\; }
 \For{$r \gets s$ \KwTo $e$}{
   $L \gets (r<e)\ ?\ B_{r,\cdot} : P$ \tcp*[r]{marks of row $r$ seen by this thread}
   $b' \gets \bot$ \tcp*[r]{last pixel of the previous run}
   \ForEach{maximal run $[a,b]$ of foreground pixels of $L$}{
     \If(\tcp*[f]{outer attempt}){$r<e$ \textbf{and} $L_a = \FG$ \textbf{and} \textbf{Trace}$(r,a,\textit{outer})$}{
        append the trace to $\mathcal{C}_k$ with reference $(b'=\bot)\ ?\ \bot : ((r,b'),\mathrm{E})$\;
     }
     \If(\tcp*[f]{hole attempt}){$r>s$ \textbf{and} $L_b \neq \EAST$ \textbf{and} $B_{r-1,b+1}\neq \BG$ \textbf{and} \textbf{Trace}$(r,b,\textit{hole})$}{
        append the trace to $\mathcal{C}_k$ with reference $((r,a),\mathrm{W})$\;
     }
     $b' \gets b$\;
   }
 }
 \Fn{\textbf{Trace}$(r, c, t)$}{
    $p \gets (r,c)$;\ $m \gets r$;\ $\sigma \gets (t=\textit{outer})\ ?\ \mathrm{W} : \mathrm{E}$;\ $first \gets$ true\;
    \While{true}{
      append $p$ to the trace\;
      $d \gets$ direction of the first neighbour $n$ of $p$, counter-clockwise from $\sigma$, with $\mathrm{ld}(B_n)\neq \BG$\;
      \lIf(\tcp*[f]{isolated pixel}){none}{mark $p$ as \EAST; \Return $t=\textit{outer}$}
      $q \gets p + d$\;
      \lIf(\tcp*[f]{upward termination}){$\mathrm{row}(q) < s$}{\Return false}
      $m \gets \min(m, \mathrm{row}(q))$\;
      \If(\tcp*[f]{Rule 1}){$\mathrm{row}(p) \le e$}{
         \lIf{$\textsc{East}(\sigma,d)$}{$\mathrm{st}(M(p),\EAST)$}
         \lElseIf{$M(p) = \FG$}{$\mathrm{st}(M(p),\VIS)$}
      }
      \If(\tcp*[f]{Rule 2}){$p$ is the start pixel \textbf{and} \textbf{not} $first$ \textbf{and} $d = d_0$}{
         \Return $(t=\textit{outer})\ ?\ (m = r) : (m = r-1)$\;
      }
      \lIf{$first$}{$d_0 \gets d$;\ $first \gets$ false}
      $p \gets q$;\ $\sigma \gets d + 5 \pmod 8$ \tcp*[r]{neighbour after the previous pixel}
    }
 }
 \caption{Stage 1 of TRUCO for stripe $k$, with $s=s_k$, $e=e_k$ and $P=P_k$. $\mathrm{ld}$ and $\mathrm{st}$ are atomic loads and stores; $M(p)$ is the mark cell of $p$ for this thread ($B$ if $\mathrm{row}(p)<e$, $P$ if $\mathrm{row}(p)=e$). Directions are numbered counter-clockwise from $\mathrm{E}=0$; $\sigma$ is the first examined direction and $d$ the direction found, so the examined neighbours are those of the cyclic range $[\sigma,d)$. $\textsc{East}(\sigma,d)$ is the east-pixel condition of Suzuki--Abe: $\mathrm{E}\in[\sigma,d)$, i.e., $(-\sigma)\bmod 8<(d-\sigma)\bmod 8$.}
 \label{alg:parallel_worker}
\end{algorithm}

After all stripes have been processed, the lists $\mathcal{C}_0,\dots,\mathcal{C}_{K-1}$ are concatenated in stripe order. No other synchronization is used: there are no locks, no read-modify-write operations and no barriers apart from the end of the parallel loop.

\paragraph{Redundant work}
Accepted traces perform exactly the contour-tracing work of the sequential algorithm. In addition, a thread may start traces on contours owned by previous threads; these traces are aborted as soon as they move above $s_k$, or rejected when they complete. Their marks stay on the thread's rows, which prevents restarting the same contour from pixels already visited, but in the worst case every thread may partially trace every contour that crosses its first row, so the redundant work can grow with $K$. Section~\ref{sec::stresstest} evaluates images designed to maximize this effect.

\subsection{Hierarchy Reconstruction}
\label{sec:hierarchy}

In Suzuki--Abe the parent of a new contour is obtained from the contour numbered $LNBD$, the last contour met in the current row before the start pixel. TRUCO cannot read this number: the label image does not store contour numbers, and the contour met before the start pixel may be owned by another thread. The reference $\rho(C)$ replaces it. The referenced contour defined below is not always the contour numbered $LNBD$ (for a hole contour, a pixel inside the run of its start pixel may have been labelled by another contour), but it yields the same parent. For a reference $\rho(C)=(q,\sigma)$, let $w$ be the neighbour of $q$ on side $\sigma$, and define the \emph{referenced contour}
\[
  D(C)=C\bigl(\comp(q),\comp(w)\bigr).
\]
Since $q$ is foreground, $w$ is background and both are 4-adjacent, $D(C)$ is well defined, contains $q$, and is unique: there is exactly one contour between two adjacent components. The following properties justify stages~2 and~3.

\paragraph{The parent rule}
Let $C$ be an outer contour $C_A$ with $\rho(C)=((r,b),\mathrm{E})$. The pixels between $b$ and $p_A$ on row $r$ are background and 4-connected, so they belong to one 0-component $Z_0$, which contains the left neighbour of $p_A$ and therefore surrounds $A$ (P0). Thus $\pi(C)$ is the hole contour of $Z_0$, or $\bot$ if $Z_0$ is the background. $Z_0$ is also adjacent to $A'=\comp((r,b))$, so one of two cases holds. If $Z_0$ is a hole surrounded by $A'$, then $D(C)=C_{Z_0}$ is a hole contour and $\pi(C)=D(C)$. Otherwise $Z_0$ surrounds $A'$, $D(C)=C_{A'}$ is an outer contour, and $\pi(D(C))$ is again the hole contour of $Z_0$ or $\bot$, so $\pi(C)=\pi(D(C))$. Note that $Z_0$ may be a hole in the second case, when it surrounds both $A'$ and $A$. If $\rho(C)=\bot$, all pixels to the left of $p_A$ on row $r$ are background and connected to the frame, so $A$ is surrounded by the background and $\pi(C)=\bot$.
Let now $C$ be a hole contour $C_Z$ with $\rho(C)=((r,a),\mathrm{W})$, so that $a$ belongs to the 1-component $A$ that surrounds $Z$. The 0-component $Z_1$ of the left neighbour of $(r,a)$ is adjacent to $A$, so either it surrounds $A$, and then $D(C)=C_A$ and $\pi(C)=D(C)$, or it is a hole surrounded by $A$, and then $D(C)=C_{Z_1}$ and $\pi(C)=C_A=\pi(D(C))$.
In all cases,
\begin{equation}
  \pi(C)=\begin{cases}
    \bot & \text{if } \rho(C)=\bot,\\
    D(C) & \text{if } C \text{ and } D(C) \text{ are of different types},\\
    \pi(D(C)) & \text{otherwise},
  \end{cases}
  \label{eq:parent}
\end{equation}
which is the rule of Table~\ref{tab:parentrule} with $D(C)$ in place of the contour numbered $LNBD$. Moreover, $D(C)$ precedes $C$ in scan order. For an outer contour, the start pixel of $D(C)$ precedes or equals $(r,b)$, which precedes $p_A$. For a hole contour, either $D(C)=C_A$ and $p_A\preceq(r,a)\preceq p_Z$ with $p_A\neq p_Z$ (P1), or $D(C)=C_{Z_1}$, whose start pixel precedes the left neighbour of $(r,a)$, and hence $p_Z$.

\paragraph{Example}
Figure~\ref{fig:references} shows a component $A$ with a hole $Z$ that contains two single-pixel components, $E$ and $E'$. Row $1$ has a single run, so $\rho(C_A)=\bot$ and $C_A$ is a root. The hole contour $C_Z$ starts at $p_Z=(2,1)$, the end of the first run of row $2$, and its reference is the first pixel of that run, on its west side: $\rho(C_Z)=((2,1),\mathrm{W})$. The west neighbour belongs to the background, so $D(C_Z)=C_A$, which is of a different type, and $\pi(C_Z)=C_A$. On row $3$, $C_E$ starts at $(3,3)$ and its reference is the end of the previous run, on its east side: $\rho(C_E)=((3,1),\mathrm{E})$. The east neighbour $(3,2)$ belongs to $Z$, a hole surrounded by $A$, so $D(C_E)=C_Z$ and $\pi(C_E)=C_Z$. Finally, $C_{E'}$ starts at $(3,5)$ and $\rho(C_{E'})=((3,3),\mathrm{E})$. The east neighbour $(3,4)$ also belongs to $Z$, but $Z$ surrounds $E$, so $D(C_{E'})=C_E$ is an outer contour, like $C_{E'}$, and $\pi(C_{E'})=\pi(C_E)=C_Z$. The referenced contour is thus not always the parent, and it may belong to a component other than that of $C$; only the rule~\eqref{eq:parent} turns it into the parent. The side is also necessary: the pixel $(3,1)$ lies on both $C_A$ and $C_Z$, but only $C_Z$ examines its east neighbour, so stage~2 finds $D(C_E)=C_Z$ and no other contour.

\paragraph{Stage 2: reference resolution}
Stage~2 computes, for every contour $C$ with $\rho(C)\neq\bot$, the index of $D(C)$ in the list of traced contours. By (P2), $D(C)$ is the only contour that, during a visit of $q$, examines the background neighbour $w$. The neighbours examined at each visit are those of the cyclic range $[d_{in}+5,d_{out})$ (Section~\ref{sec:suzuki_abe}), where $d_{in}$ and $d_{out}$ can be recomputed from consecutive points of the contour; at the start pixel, the first visit examines the range $[\sigma_0,d_{out})$, with $\sigma_0=\mathrm{W}$ for an outer contour and $\mathrm{E}$ for a hole contour, and the closing visit is treated like any other. A single-pixel contour examines both sides. Stage~2 therefore proceeds as follows:
\begin{enumerate}
  \item A map $R$ over the pixels of the image stores, for every pixel $q$, the set of sides $\sigma\in\{\mathrm{W},\mathrm{E}\}$ such that $(q,\sigma)$ is the reference of some contour. Each pair $(q,\sigma)$ is the reference of at most one contour, because it is determined by the run of $q$ and the type of the contour that refers to it.
  \item The points of all contours are traversed. At each visit of a pixel $q$ by contour $D$, for every side $\sigma\in R(q)$ examined during that visit, the contour whose reference is $(q,\sigma)$ receives $D$ as its referenced contour.
\end{enumerate}
The traversal is divided into blocks containing the same number of points, regardless of contour boundaries, and the blocks are processed in parallel. Each entry $D(C)$ is written only by visits of a single contour, $D(C)$ itself, and always with the same value, so the result does not depend on the order in which the blocks are processed. The cost is linear in the total number of contour points, and a single long contour is split among several threads.

\paragraph{Stage 3: hierarchy assembly}
The contours are sorted in scan order. Each list $\mathcal{C}_k$ is already sorted, and two consecutive lists can only be out of order on the row $e_k$ that they share, so only the contours that start on those $K-1$ rows need to be sorted. The parents are then computed with~\eqref{eq:parent}, visiting the contours in scan order: since $D(C)$ precedes $C$, the value $\pi(D(C))$ is always available when $C$ is visited. This stage is sequential; apart from sorting the contours that start on the rows $e_k$, its cost is linear in the number of contours.

\begin{algorithm}[!tbp]
\renewcommand{\baselinestretch}{0.9}\selectfont
\SetAlgoLined
\DontPrintSemicolon
\KwIn{Contours $C_1,\dots,C_n$ from stage 1, with start pixels, types and references}
\KwOut{Contours in scan order and parent map $\pi$}
 \tcp{Stage 2 (parallel over blocks of points)}
 $R(q)\gets\emptyset$ for all $q$;\ \lFor{$i\gets1$ \KwTo $n$ \textbf{with} $\rho(C_i)=(q,\sigma)$}{$R(q)\gets R(q)\cup\{\sigma\}$}
 \ForEach{visit of a pixel $q$ by a contour $C_j$}{
   $S\gets$ sides of $q$ examined during the visit (from the entry and exit directions)\;
   \lForEach{$\sigma\in R(q)\cap S$}{$D(C_i)\gets C_j$, where $\rho(C_i)=(q,\sigma)$}
 }
 \tcp{Stage 3 (sequential)}
 sort the contours that start on the rows $e_0,\dots,e_{K-2}$ into scan order\;
 \ForEach{contour $C$ in scan order}{
   \uIf{$\rho(C)=\bot$}{$\pi(C)\gets\bot$}
   \uElseIf{$C$ and $D(C)$ have different types}{$\pi(C)\gets D(C)$}
   \lElse{$\pi(C)\gets\pi(D(C))$}
 }
 \caption{Stages 2 and 3 of TRUCO: hierarchy reconstruction.}
 \label{alg:hierarchy}
\end{algorithm}

\subsection{Synchronization and Memory Model}
\label{sec:memory}

Since all threads share $B$, the algorithm must also be correct with respect to the memory model of the language in which it is implemented. The memory models of the main languages with shared-memory threads, such as C, C++, Java and Rust, are built around the notion of a \emph{data race}: two conflicting accesses to the same memory location, at least one of them a write, that are not ordered by synchronization and are not both atomic. Programs without data races have well-defined behaviour and, if all their atomic accesses are sequentially consistent, behave as if their accesses were interleaved sequentially \cite{Adve1990,Boehm2008}. Programs with data races, in contrast, receive weak or no guarantees: in C and C++ their behaviour is undefined \cite{ISOCPP20,Batty2011}, and compilers may transform them in ways that break them even if they would be correct on the hardware \cite{Boehm2011benign}. TRUCO is free of data races by construction, independently of the language, and its requirements on the memory model are minimal, as shown next.

\paragraph{Who writes what}
In stage~1, each pixel of $B$ is written by at most one thread: the pixels of rows $[s_k,e_k)$ only by thread $k$ (Rule~1), and the frame and background pixels by no thread. The private rows and the per-stripe lists are accessed by a single thread. In stage~2, the map $R$ is only read, and each entry $D(C)$ is written only by the visits of the contour $D(C)$, possibly from two different blocks, always with the same value. The results of each stage are read by the next one only after its parallel loop has finished, which synchronizes with the completion of every task.

\paragraph{Concurrent accesses}
In stage~1, the pixels of rows $[s_k,e_k)$ can be accessed concurrently by thread $k$, which writes marks, and by other threads, which read them to test background/foreground while tracing, or to build the private copy of row $s_k=e_{k-1}$. It is sufficient that these accesses, the stores of thread $k$ and the loads of the other threads, are atomic. The remaining accesses to these pixels are reads by thread $k$ itself, when scanning its own rows and testing its own marks; they do not conflict with the loads of other threads (two reads never conflict) and are ordered with the thread's own stores by program order. In stage~2, the only possible concurrent writes are those of the same value to the same entry $D(C)$, which are also made atomic. Hence no pair of conflicting accesses is unordered and non-atomic, and the algorithm is data-race free.

\paragraph{Why no ordering is needed}
Atomicity guarantees that each load returns a value actually stored in that location, without tearing. TRUCO needs nothing more: no ordering between accesses to different locations, no fences and no read-modify-write operations. The reason is that no decision of a thread depends on a value written by another thread during the same stage: cross-thread loads in stage~1 only distinguish background from foreground, and every value that a pixel can hold has the same background/foreground status (Section~\ref{sec:decomposition}). The weakest atomic accesses offered by the language are enough, such as \emph{relaxed} atomics in C, C++ and Rust. In our implementation, the atomic accesses are applied directly to the pixels of the image, so no separate copy of it is needed. On x86-64, the architecture used in the experiments, relaxed atomic loads and stores of a byte are compiled to the same load and store instructions as non-atomic accesses, without fences or locked instructions; at the language level, their only effect is to forbid compiler transformations that are invalid under concurrency, such as inventing or merging stores.

\paragraph{Cache coherence}
Although no software synchronization is used, hardware cache coherence remains active: when a thread follows a contour into the rows of another stripe, it reads cache lines that the owner of those rows may be writing, which can cause coherence traffic, especially in images where a few contours span many stripes (Section~\ref{sec::stresstest}).

%%%%%%%%%%%%%%%%%%%%%%%%%%%%%%%%%%%%%%%%%%%%%%%%%%%%%%%%%%%%%%%%%%%%%%%%%%%%%
\section{Experimental Section}
\label{sec::experiments}
%%%%%%%%%%%%%%%%%%%%%%%%%%%%%%%%%%%%%%%%%%%%%%%%%%%%%%%%%%%%%%%%%%%%%%%%%%%%%

The experiments address five questions: (i) does the implementation produce exactly the output of Suzuki--Abe, contours and hierarchy, for every number of threads and in repeated executions; (ii) how does it scale with the number of threads, the resolution and the contour density; (iii) how does it behave on real high-resolution images; (iv) what is the cost of the hierarchy stages; and (v) where are its limits.

\subsection{Methods, Datasets and Protocol}
\label{sec:setup}

\paragraph{Compared methods}
The function \texttt{cv::findContours} of OpenCV, the most widely used implementation of Suzuki--Abe, does not follows exactly the original algorithm, for efficiency purposes. Its label image has one byte per pixel, so it stores only seven bits of each contour number, and the label found during the scan does not identify the contour $LNBD$ met before a start pixel. To find that contour, OpenCV keeps the contours that share each label value in a list and traces again, from their start pixels, the candidates whose bounding box contains the pixel, until it finds the one that passes through it. The cost of the hierarchy in OpenCV therefore depends on the number and the length of these candidates, in which every contour is traced exactly once. For this reason, the reference with the hierarchy is a sequential implementation of the Suzuki--Abe algorithm, written for this study, that follows the original paper \cite{SUZUKI198532}: a single raster scan over a 32-bit label image, in which every contour is traced once and its parent is obtained from $LNBD$ with the rule of Table~\ref{tab:parentrule}. It returns the same output as \texttt{cv::findContours} of OpenCV~4.13.0, the version used in all the experiments, in mode \texttt{RETR\_TREE} with \texttt{CHAIN\_APPROX\_NONE}, which we verified on 20\,000 random images and on samples of every dataset, and its source code is distributed with TRUCO. Its scan skips the runs of background pixels and the interiors of the runs of foreground pixels with vector instructions, as \texttt{cv::findContours} and TRUCO do, so that the comparison reflects the algorithms. We also report \texttt{cv::findContours} in mode \texttt{RETR\_TREE}, since it is the implementation used in practice. Without the hierarchy, the references are \texttt{cv::findContours} in mode \texttt{RETR\_LIST}, which runs the same algorithm on a one-byte label image with a vectorized scan, and the link-run method \cite{HE20091977} of OpenCV (\texttt{findContoursLinkRuns}, denoted RLE), which computes only a two-level hierarchy. All of them are single-threaded. TRUCO is reported without the hierarchy (stage~1) and with it (the three stages of Section~\ref{sec:ourmethod}; TRUCO-H in the figures), and each variant is compared with the methods that produce the same kind of output. TRUCO 1T denotes the execution with one thread and TRUCO best the execution with the best thread count for each image. Speedups are the time of the other method divided by the time of TRUCO.

The tiling method of Garcia-Molla and Alonso-Jord\'a \cite{Garcia_2023}, the most recent parallel contour tracer for CPUs, is not included. Its public implementation\footnote{\url{http://personales.upv.es/vmgarcia/borders_omp.tar.gz}} fixes the size of the image, the input file and the maximum number of borders per tile at compile time, and it writes an image in which the borders are marked, rather than the contour sequences and their hierarchy. A comparison on the output considered here would require rewriting a large part of it, and would therefore evaluate our implementation rather than that of the authors. GPU tracers address a different setting, in which the image is already in GPU memory (Section~\ref{sec:related_work}), and are not included.

\paragraph{Hardware and protocol}
We use an Intel Core i7-11700F (8 cores and 16 hardware threads, of which the benchmark uses 14) and an Intel Xeon Silver 4510 (two sockets of 12 cores; the benchmark uses the 24 hardware threads of one NUMA node). Pairs of values $a$/$b$ refer to these two CPUs, in this order. On both, the benchmark runs on cores reserved for it, at the base frequency and with real-time priority. All code is compiled with GCC~13 (\texttt{-O3}, no \texttt{-march}) and linked against OpenCV~4.13.0, and the number of threads of TRUCO, which is also the number of stripes $K$, is set with \texttt{cv::setNumThreads}; \ref{app:experiments} gives the details. For each image, the methods are executed in 4 rounds, in a random order in each round and 5 consecutive times each, and the reported time is the minimum of the 20 executions, which discards cold-start effects. The timed region contains one call to the method, including the allocation of the output, the conversion to a 32-bit label image for Suzuki--Abe and the creation of the padded one-byte copy $B$ for TRUCO; with the hierarchy, it includes the three stages and the construction of the output in the layout of OpenCV. Tables report the median time per image; speedups are computed per image and summarized by their median and inter-quartile range (IQR) over images \cite{Hoefler2015}.

\paragraph{Datasets}
We have employed a total of three datasets, one synthetic dataset, in which the contour density is controlled, and two datasets of real images, which represent two of the most common uses of contour extraction: the analysis of thresholded photographs and the vectorization of segmentation masks. Figure~\ref{fig:datasets} shows examples of each. The first one is a synthetic dataset of 40 binary images of $4000\times4000$ pixels with randomly placed small circles, generated with 8 object densities $p$ from $0.05$ to $0.75$ (5 images per density), which allows us to control the number of contours. The second one is the Liu4K dataset \cite{Liu4K}\footnote{\url{https://github.com/flyywh/Liu4K_benchmark}}, a collection of high-resolution photographs of varied scenes, which we use to reproduce the first stage of fiducial marker detection: the image is thresholded locally and all its contours are extracted to search for the markers among them \cite{ROMERORAMIREZ201838}. The same dataset has recently been used to evaluate the ArUco Nano marker detector \cite{GarridoJurado2026ArUcoNano}. We use the 924 images of its training set with a resolution of at least 16\,MP, resize them to that size, keeping their aspect ratio, and binarize them with the adaptive threshold of OpenCV (mean method, block size 15, constant 3, inverted output), which gives dense, irregular masks with many small components and holes. The images of these two datasets are also downscaled to 8, 4, 2 and 1\,MP before binarization, to study the effect of the resolution. The third one is the Inria Aerial Image Labeling dataset \cite{Maggiori2017Inria}\footnote{\url{https://project.inria.fr/aerialimagelabeling/}}, which represents the conversion of segmentation masks into polygons \cite{Xu2023HiSup,Zhang2025GCP}. We use the building masks of its training set, 180 binary images of $5000\times5000$ pixels (25\,MP), as they are. Unlike the previous ones, they are mostly background: an image has between 42 and 7003 contours (median 1483), 27\% of them holes, with up to four nesting levels, and its contour points are fewer than 1\% of its pixels.

\begin{figure*}[t!]
  \centering
  \begin{tikzpicture}
    \node[inner sep=0, anchor=south west] (img) at (0,0) {\includegraphics[width=\textwidth]{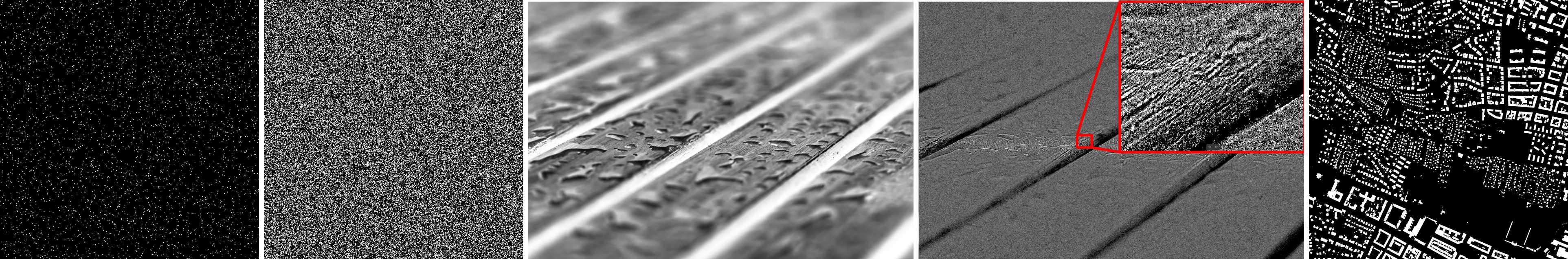}};
    % label centred under each panel (panel centres as fractions of the image width)
    \begin{scope}[x={(img.south east)}, y={(img.north west)}]
      \foreach \x/\l in {0.0826/a, 0.2509/b, 0.4596/c, 0.7087/d, 0.9174/e}
        \node[below, font=\footnotesize] at (\x,0) {(\l)};
    \end{scope}
  \end{tikzpicture}
  \caption{Examples of the datasets: (a) and (b) synthetic images with object density $p=0.05$ and $p=0.75$; (c) a Liu4K image and (d) its binary mask after the adaptive threshold, with a magnified detail; (e) an Inria building mask.}
  \label{fig:datasets}
\end{figure*}

\subsection{Correctness and Determinism}
\label{sec:validation}

A dedicated program compares the output of TRUCO with that of \texttt{cv::findContours} (\texttt{CHAIN\_APPROX\_NONE}) in the modes \texttt{RETR\_TREE} and \texttt{RETR\_LIST}. The exact comparison requires the same contours, in the same order, with the same point sequences and the same hierarchy vector; a topological comparison, independent of the order, and a check of the internal consistency of the hierarchy are also made (\ref{app:experiments}). Every image was processed with every thread count, several times, on the two CPUs and on a third machine with two Intel Xeon Silver 4316 (80 hardware threads on two sockets). Besides the images of every dataset and the stress images, 20\,000 random images of $1\times1$ to $120\times120$ pixels, with foreground densities between 0.05 and 0.95 and some of them dilated, eroded or replaced by their morphological gradient, exercise very short stripes, more stripes than rows and contours split between the tasks of stage~2. No difference was found in any of the 5.2 million executions (Table~\ref{tab:validation}). Since the exact comparison requires the same list, order and hierarchy, the output was also the same in every repetition, for every number of stripes and on the three machines. Under ThreadSanitizer \cite{Serebryany2009}, with TRUCO and OpenCV both instrumented, 52\,440 executions on 3000 random images and on 113 images of the other sets, with 1 to 20 threads, reported no data race in TRUCO and gave the same outputs.

\subsection{Speedup over Each Method}
\label{sec:summary}

Table~\ref{tab:summary} gives, for each dataset at its largest resolution, the time of every method and the speedup of TRUCO over each of them; the smallest speedup of each group is the speedup over the fastest competitor. With the hierarchy, TRUCO best is 9.8/17.4 times faster than the fastest of Suzuki--Abe and OpenCV \texttt{RETR\_TREE} on the synthetic images, 5.5/7.7 times on Liu4K and 4.7/10.0 times on the Inria masks. Without the hierarchy, it is 7.9/17.8, 7.1/13.0 and 2.3/7.5 times faster than the fastest of \texttt{RETR\_LIST} and RLE. On Liu4K, it processes a 16\,MP image with the hierarchy in 26.6/21.8\,ms, while OpenCV \texttt{RETR\_TREE} is more than 100 times slower. With one thread, TRUCO is already faster than every method with the same kind of output, except RLE on the Inria masks (Section~\ref{sec:ablation}), and the parallel speedup over TRUCO 1T is between 4.1 and 12.1 on the 16\,MP images. The two CPUs have similar sequential times, so the higher speedups of the Xeon come from its larger number of threads.

\begin{table*}[t!]
\centering
\caption{Median time per image (ms) of every method on each dataset at its largest resolution and, below it, the median speedup of TRUCO 1T / TRUCO best over that method. In the columns TRUCO best, the value in parentheses is the parallel speedup, TRUCO best over TRUCO 1T. OpenCV: \texttt{cv::findContours} with \texttt{RETR\_LIST} (without hierarchy) or \texttt{RETR\_TREE} (with hierarchy); $^\dagger$measured on every tenth image. RLE: \texttt{cv::findContoursLinkRuns}. S--A: Suzuki--Abe. A speedup above one means that TRUCO is faster. Times at every resolution are given in \ref{app:experiments}.}
\label{tab:summary}
\resizebox{\textwidth}{!}{%
\input{tables/summary.tex}}
\end{table*}

\subsection{Scaling with Threads, Resolution and Image Content}
\label{sec:scaling}

Figure~\ref{fig:scaling} shows the speedup as a function of the number of threads at every resolution. On both CPUs, it still grows at the largest thread count, so the measured range of threads does not reach the point at which additional threads stop reducing the time. With the hierarchy, it also grows with the resolution, from 5.5/7.4 on the synthetic images and 3.8/4.7 on Liu4K at 1\,MP, because larger images give each stripe more work with respect to the fixed costs of the parallel loop and of the hierarchy stages; without the hierarchy the dependence is weaker, except on Liu4K on the Xeon. Denser images give more parallel work: on the synthetic images at 16\,MP, from $p=0.05$ to $p=0.75$ the time of Suzuki--Abe grows from about 60 to more than 100\,ms, while that of TRUCO with all the threads stays between 2 and 15\,ms (\ref{app:experiments}). On Liu4K the speedup also depends on the image: at 16\,MP with the hierarchy its IQR is 4.8--6.3/6.4--9.1 and its range 2.9--18.8/2.9--42.9.

\begin{figure*}[t!]
  \centering
  \includegraphics[width=\textwidth]{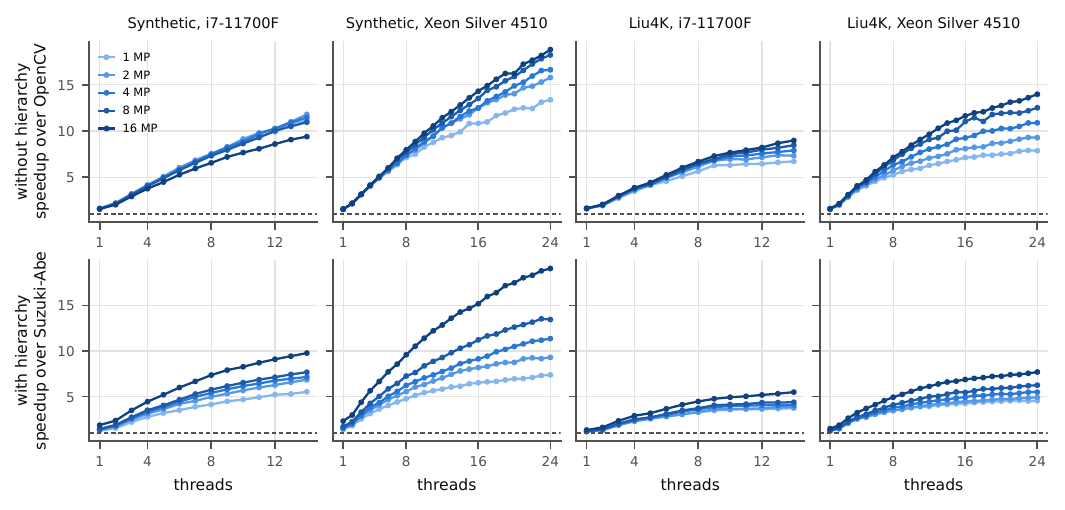}
  \caption{Median speedup of TRUCO as a function of the number of threads on the synthetic and Liu4K datasets, one curve per resolution: without hierarchy with respect to OpenCV \texttt{RETR\_LIST} (top) and with hierarchy with respect to Suzuki--Abe (bottom).}
  \label{fig:scaling}
\end{figure*}

\paragraph{Cost of the hierarchy}
Figure~\ref{fig:liu4k_phases} decomposes the time of TRUCO-H on Liu4K at 16\,MP. With one thread, stage~1 takes about three quarters of the time and stage~2 most of the rest. Both scale with the number of threads: with all of them, stage~2 takes 16--22\% of the time, and the sequential assembly becomes a larger fraction. Overall, TRUCO-H takes 1.7--2.2 times the time of TRUCO without the hierarchy, consistently with the second pass over the contour points of Section~\ref{sec:hierarchy}, but it remains well below Suzuki--Abe at every resolution.

\begin{figure*}[t!]
  \centering
  \includegraphics[width=\textwidth]{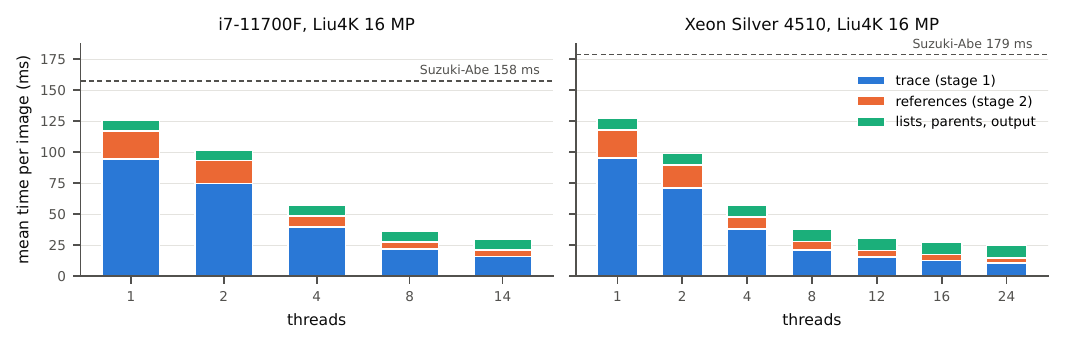}
  \caption{Liu4K dataset, 16\,MP: mean time per image of TRUCO with the hierarchy, decomposed into its stages, as a function of the number of threads. The dashed line is the mean time of Suzuki--Abe.}
  \label{fig:liu4k_phases}
\end{figure*}

\subsection{Sparse Segmentation Masks}
\label{sec:inria}

On the Inria masks, the speedup over Suzuki--Abe is much higher, 18.0/43.3, but a large part of it is sequential: with one thread, TRUCO-H is already 6.2/7.7 times faster (Table~\ref{tab:summary}). Since most pixels are background, the time of Suzuki--Abe is dominated by the initialization and the scan of its four-byte label image, while TRUCO and OpenCV use a one-byte image. OpenCV \texttt{RETR\_TREE}, which costs only 4\% more than \texttt{RETR\_LIST} here, is therefore the fastest sequential method with the hierarchy, and TRUCO best is 4.7/10.0 times faster than it. RLE, which has few runs to process on masks with few compact objects, is 1.5/1.3 times faster than TRUCO 1T, and TRUCO best is 2.3/7.5 times faster than RLE. The parallel speedup, 2.9/5.7 with the hierarchy, is lower than on Liu4K, and on the i7-11700F the time flattens beyond about 8 threads (Figure~\ref{fig:inria}). Stage~1 goes from 7.1/6.5 to 1.5/0.7\,ms, but the time outside the stages, mostly the copy of the input into the padded label image, only goes from 2.4/2.8 to 1.5/0.4\,ms, since the copy scales poorly on the i7-11700F; with all the threads it is 38/22\% of the time. The hierarchy costs 1.4/2.1 times the time without it, and only 1.2 times with one thread, because there are few contour points per pixel.

\begin{figure}[t!]
  \centering
  \includegraphics[width=\linewidth]{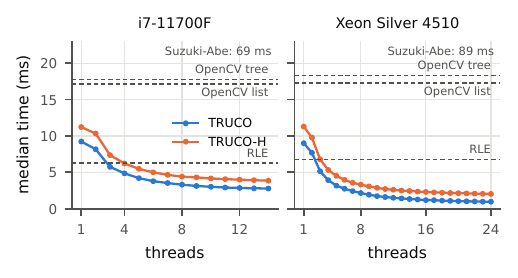}
  \caption{Inria building masks: median time per image of TRUCO without (TRUCO) and with (TRUCO-H) the hierarchy as a function of the number of threads. The dashed lines are the single-threaded OpenCV methods; Suzuki--Abe is out of the plotted range.}
  \label{fig:inria}
\end{figure}

\subsection{Single-Thread Performance}
\label{sec:ablation}

TRUCO separates the tracing of the contours from the computation of the hierarchy, so that the trace marks the pixels with four fixed values, which fit in a one-byte label image, instead of contour numbers. This section shows that this separation is advantageous by itself, independently of the parallel decomposition, and separates its contribution from that of the vectorized scan. To this end, Table~\ref{tab:ablation} separates the sources of the sequential gain with three variants of TRUCO 1T without the hierarchy, which differ only in the label image and in the scan and return the same contours. Even with a 32-bit label image and a scalar scan, TRUCO 1T is 11--38\%/11--49\% faster than Suzuki--Abe, which uses a 32-bit label image and a vectorized scan. This part of the gain therefore comes neither from the size of the label image nor from vectorization, but from the tracing procedure, in which TRUCO marks the pixels with four fixed values instead of contour numbers. The one-byte label image matters when the label image no longer fits in the caches: it reduces the time by a factor of 1.3--1.4 on Liu4K and of 1.6--1.8 on the synthetic images at 16\,MP, and by less than 5\% at 1\,MP. The vectorized scan reduces it by a further factor of 1.16--1.37. On the Inria masks both effects are larger (2.7/4.3 and 2.3/2.1). Since \texttt{cv::findContours} also uses a one-byte label image and a vectorized scan, TRUCO 1T is only 1.5--2.1 times faster than \texttt{RETR\_LIST} (Table~\ref{tab:summary}).

The columns with hierarchy of Table~\ref{tab:ablation} evaluate the second contribution of this paper, the reconstruction of the hierarchy from per-contour references, which differs from the computation of both Suzuki--Abe and OpenCV. Suzuki--Abe obtains the hierarchy during the scan, from the contour numbers stored in a 32-bit label image; TRUCO separates it from the trace, through the references of Section~\ref{sec:hierarchy}, and this is what allows its label image to keep one byte per pixel. The additional pass over the contour points increases the time of TRUCO 1T by a factor of only 1.2--1.3, less than what the one-byte label image saves, so that, with a single thread, TRUCO computes the full hierarchy 1.2--1.9/1.3--2.3 times faster than Suzuki--Abe on the synthetic and Liu4K images and 6.2/7.7 times faster on the Inria masks. It is also faster than OpenCV \texttt{RETR\_TREE}, which uses a one-byte label image but a slower computation of the hierarchy (Section~\ref{sec:setup}), on every set: by a factor of 2.4--3.9 on the synthetic images and on Liu4K at 1\,MP, of 27/30 on Liu4K at 16\,MP and of 1.7/1.8 on the Inria masks. The reconstruction from references is therefore not only what makes the parallel decomposition possible: with a single thread, it already makes the computation of the full hierarchy faster than with the two sequential methods.

\begin{table*}[t!]
    \centering
    \caption{Single-thread performance: median time per image (ms). S--A: Suzuki--Abe without the hierarchy; with it, it takes only 0--4\% more. TRUCO 1T without hierarchy: three variants that differ in the size of the label image and in the scan; the variant with an 8-bit label image and a vector scan is the actual TRUCO. With hierarchy: OpenCV \texttt{RETR\_TREE} ($^\dagger$measured on every tenth image) and TRUCO 1T with its three stages, from the main runs.}
    \label{tab:ablation}
    \input{tables/ablation.tex}\end{table*}

\subsection{Worst-Case Stress Tests}
\label{sec::stresstest}

TRUCO performs redundant work on contours that start in one stripe and cross many others (Section~\ref{sec:rules}). Two $4000\times4000$ images maximize it (Figure~\ref{fig:stress_tests}). The first consists of 2000 vertical lines one pixel wide, separated by one background pixel and spanning the whole height: all of them are owned by thread~0, and every other thread traces each line down and back up until it reaches its own first row, where it aborts the trace. The second is a single spiral, whose outer contour crosses all the stripes many times and can only be accepted by the thread that owns its top row.

TRUCO does not speed up on these images, but its time does not increase systematically with the number of threads either, since the redundant traces run concurrently with the owner, which traces the contours as the sequential algorithm does. On the stripes, the time rises from one to two threads (from 227 to 261\,ms and from 200 to 235\,ms), which we attribute to coherence traffic and to the shared memory bandwidth, and then stays flat; with all the threads, TRUCO-H is 1.09 times slower than Suzuki--Abe on the i7-11700F and 1.12 times faster on the Xeon. On the spiral, TRUCO-H with all the threads is 1.07 times faster than Suzuki--Abe on the i7-11700F; on the Xeon it is faster than Suzuki--Abe with every thread count, although its time varies irregularly (231--370\,ms) as the redundant traces interfere with the owner. Stage~2 scales normally on both images, since it splits the points of each contour among the threads (on the stripes, from 68\,ms with one thread to 10/5.6\,ms). In summary, on these two images, designed to maximize the redundant work, TRUCO obtains no parallel gain, but its time does not increase systematically with the number of threads, and TRUCO-H is at most 1.09 times slower than Suzuki--Abe.

\begin{figure}[t!]
    \centering
    \begin{subfigure}[b]{0.3\linewidth}
      \includegraphics[width=\linewidth]{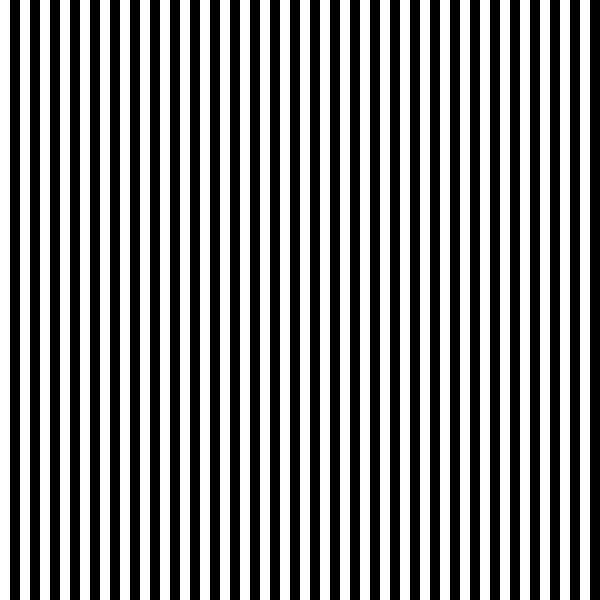}
      \caption{Stripes ($60\times60$ crop).}
    \end{subfigure}\hspace{0.05\linewidth}
    \begin{subfigure}[b]{0.3\linewidth}
      \includegraphics[width=\linewidth]{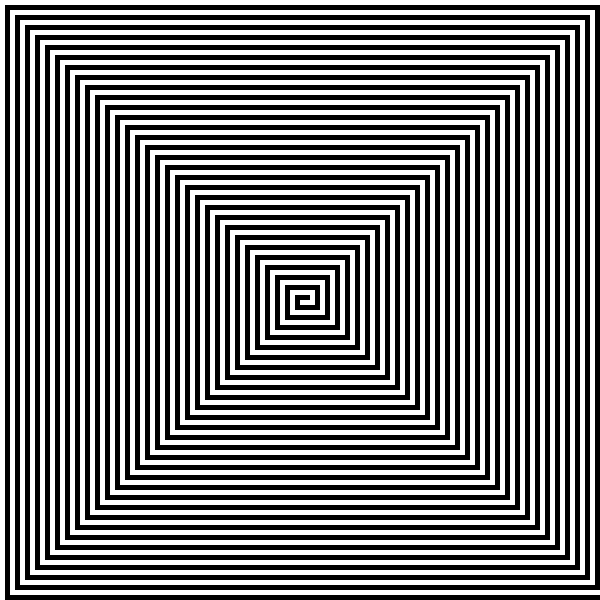}
      \caption{Spiral ($120\times120$ crop).}
    \end{subfigure}

    \vspace{0.3cm}
    \includegraphics[width=\linewidth]{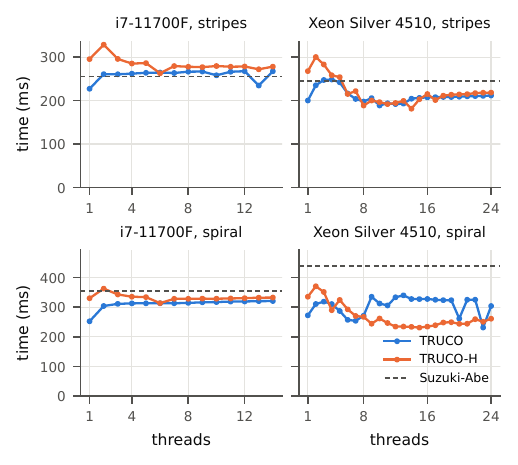}
    \caption{Stress images (top) and time of TRUCO without (TRUCO) and with (TRUCO-H) the hierarchy as a function of the number of threads (bottom). The dashed line is Suzuki--Abe with the hierarchy.}
    \label{fig:stress_tests}
\end{figure}

\subsection{Discussion}
\label{sec:discussion}

The experiments show that TRUCO returns the output of Suzuki--Abe in every execution and that, at the largest resolution of each dataset, it is faster than the sequential methods with the same kind of output, both with and without the hierarchy. Its speedup combines two gains of different origin. The first is sequential: with a single thread, TRUCO is already faster than Suzuki--Abe, and Section~\ref{sec:ablation} traces this gain to the tracing procedure, to the one-byte label image and to the vectorized scan. The second is parallel, 4.1--12.1 on the 16\,MP images (Table~\ref{tab:summary}). Both are part of the method: a sequential tracer can keep a one-byte label image, as OpenCV does, but a parallel one cannot store contour numbers in it, since the numbers of the contours found by the other stripes are not known when a stripe is scanned; the references of Section~\ref{sec:hierarchy} recover the hierarchy without them.

The parallel gain depends on the content of the image. It is smallest on the Inria masks, where there is little tracing work per pixel and the copy of the input into the padded label image, which scales poorly on the i7-11700F, is a large part of the time with all the threads. It also varies widely between the Liu4K images (Section~\ref{sec:scaling}), because the stripes are assigned statically and the work of each thread depends on how the contours are distributed over the rows; in the stress tests, where all the contours belong to one stripe, there is no parallel gain. A dynamic assignment of more stripes than threads would reduce this imbalance, and the output would remain deterministic for a fixed number of stripes $K$, since it does not depend on which thread processes each stripe.

%%%%%%%%%%%%%%%%%%%%%%%%%%%%%%%%%%%%%%%%%%%%%%%%%%%%%%%%%%%%%%%%%%%%%%%%%%%%%
\section{Conclusions}
\label{sec:conclusions}
%%%%%%%%%%%%%%%%%%%%%%%%%%%%%%%%%%%%%%%%%%%%%%%%%%%%%%%%%%%%%%%%%%%%%%%%%%%%%

This paper has presented TRUCO, a parallel contour-tracing algorithm for multi-core CPUs that returns exactly the output of the Suzuki--Abe algorithm: the same contours, in the same order, with the same hierarchy. It splits the image into horizontal stripes and lets every thread follow contours freely downwards, but restricts where each thread writes and which trace may accept each contour. Writing marks only on its own rows, plus a private copy of the next row, makes the behaviour of each thread independent of the others; accepting outer contours only from their top row and holes only from the row below it gives every contour a single owner, without communication. Because threads read the rows of other stripes only to tell background from foreground, which never changes, these reads can be relaxed atomic operations, and the algorithm is free of data races.

The hierarchy is recovered without contour numbers in the label image: each contour carries a reference, a pixel and a side known from the scan of its start row, which is resolved by a parallel pass over the contour points, and the parents follow from the rule of Suzuki--Abe. The method is modular: applications that only need the contours run the tracing stage alone, and those that need the hierarchy pay for an additional parallel pass over the contour points, which the sequential algorithm does not require but which does not offset the gain of parallel tracing. We have shown that the result is deterministic, that every contour is emitted exactly once and that the contours and the hierarchy equal those of Suzuki--Abe, and we have verified this equivalence on 5.2 million executions with 1 to 80 threads on three machines, and ThreadSanitizer reported no data race.

The experiments show that, at the largest resolution of each dataset, TRUCO is 2.3 to 17.8 times faster than the fastest sequential method with the same kind of output, both with and without the hierarchy. Part of this gain does not come from parallelism: separating the trace from the computation of the hierarchy already makes TRUCO with a single thread faster than the sequential methods in almost all cases. On images designed to maximize the redundant work of the stripe decomposition, TRUCO obtains no parallel gain, but its time does not increase systematically with the number of threads.

\section*{Declaration of generative AI and AI-assisted technologies in the manuscript preparation process}
During the preparation of this work, the authors used Claude (Anthropic) to assist in coding and writing. After using these tools, the authors reviewed and edited the content as needed and take full responsibility for the content of the published article.

\section*{Acknowledgements}
This research has been funded by the PID2023-147296NB-I00 project of the Ministry of Science, Innovation, and Universities of Spain.

\section*{CRediT authorship contribution statement}
\begin{itemize}
    \item \textbf{Rafael Muñoz-Salinas:} Conceptualization, Methodology, Software, Validation, Writing -- Original Draft, Project administration, Funding acquisition.
    \item \textbf{Francisco J. Romero-Ramirez:} Software, Investigation, Data Curation, Visualization, Writing -- Review \& Editing.
    \item \textbf{Manuel J. Marín-Jiménez:} Conceptualization, Formal analysis, Supervision, Writing -- Review \& Editing, Funding acquisition.
\end{itemize}

\section*{Data Availability Statement}
The source code of TRUCO is available at \url{https://github.com/rmsalinas/truco}. The synthetic and stress images and the raw timing files are available from the corresponding author upon reasonable request. Real-world images were obtained from the Liu4K dataset \cite{Liu4K} and from the Inria Aerial Image Labeling dataset \cite{Maggiori2017Inria}.

\appendix
\section{Correctness}
\label{app:proofs}

This appendix proves that TRUCO is deterministic and that its output is identical to that of Suzuki--Abe, with the notation of Sections~\ref{sec:suzuki_abe} and~\ref{sec:ourmethod}; in particular, $M(q)$ is the mark of pixel $q$ as seen by the thread under consideration (its value in $B$ for rows $<e_k$ and in the private row $P_k$ for row $e_k$). By (P2), the pixel sequence of a trace is determined by its start pixel, its start type and the background/foreground status of the pixels, never by marks. Correctness of stage~1 therefore depends only on the \emph{decisions} of the threads: which attempts are made and which are accepted.

\begin{proposition}[Determinism]\label{prop:det}
For a fixed input image and number of stripes $K$, the list $\mathcal{C}_k$ produced by thread $k$ in stage~1 is the same in every execution, whatever the scheduling and interleaving of the threads. Hence the whole output depends only on the image and on $K$.
\end{proposition}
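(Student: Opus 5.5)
The plan is to show that every quantity thread $k$ \emph{reads and uses in a decision} during stage~1 is independent of the schedule. Once this holds, the sequence of operations of thread $k$ is a deterministic function of the image and $K$, by induction on its steps. I will classify every load of thread $k$ by location and by how its value is used. There are three kinds. First, loads of marks on rows $[s_k,e_k)$ of $B$ and in $P_k$, which are the loads used by the start tests and by the tests $M(p)=\FG$ in \textbf{Trace}. Second, loads of pixels on rows $\ge e_k$ of $B$, or on rows of other stripes, that are used only through the predicate $\mathrm{ld}(B_n)\neq\BG$; this covers the neighbour search, the north-east test $B_{r-1,b+1}\neq\BG$ (note that row $r-1\ge s_k$ here, since $r>s_k$), and the construction of $P_k$. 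Third, the run decomposition of each row $L$, which also depends only on the predicate $\neq\BG$.

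Loads of the second and third kinds are handled by the two invariants of Section~\ref{sec:decomposition}. Background is never written, and every mark (\FG, \VIS, \EAST) is non-\BG. Atomicity guarantees that each load returns some value that was actually stored in that location, with no tearing. Hence, whatever value is returned, its background/foreground status equals that of the input. In particular, $P_k$ is the same row in every execution, because every foreground value is normalized to \FG{}. The traced paths, including the upward-termination test and the value of $m$, are then fixed by (P2).

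Loads of the first kind touch only cells that, by Rule~1, are written by thread $k$ alone; the other threads only read rows $[s_k,e_k)$. By program order, a load of thread $k$ returns the last store of thread $k$ to that cell, or the initial value if there was none. I will argue this by induction over the sequential execution of thread $k$. Assume the first $n$ steps of thread $k$ coincide in two executions. Then the same stores have been made to its private cells, so step $n+1$ reads the same marks. It also reads the same background status, by the previous paragraph. Hence it makes the same decision: which attempt is tried, the directions $d$ and $d_0$, whether it stores \EAST{} or \VIS{}, and whether it returns. It then appends the same trace and the same reference.

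The main obstacle is making the first-kind claim rigorous under a weak memory model. I must justify that no other thread's store can ever be observed in thread $k$'s own cells, and that relaxed atomics do not weaken program-order visibility of a thread's own stores; coherence of a single location (read-after-write by the same thread) gives the latter. The former follows directly from Rule~1 and from the fact that only thread $k$ writes $P_k$. I also need the loop to be well defined: termination of each trace follows from (P2) or from upward abortion. Finally, because the $\mathcal{C}_k$ are concatenated in stripe order after the parallel loop, and stages~2 and~3 are deterministic functions of the lists (each $D(C)$ entry is written with a unique value), the whole output depends only on the image and $K$.
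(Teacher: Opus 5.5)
Your proposal is correct and follows essentially the same argument as the paper. Both proofs split the reads of thread $k$ into two groups. Background/foreground tests are fixed by the two invariants and atomicity. Reads of its own marks on rows $[s_k,e_k)$ and in $P_k$ are written only by thread $k$ (Rule~1). Both then conclude by induction on the steps, and note that stages~2 and~3 are deterministic functions of the lists. Your remarks on single-location coherence, on $P_k$ being normalized to \FG{}, and on termination make explicit what the paper leaves implicit, without changing the route.
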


\emph{Proof.} Thread $k$ reads three kinds of values. (i) The background/foreground status of arbitrary pixels of $B$. Background bytes are never written and foreground bytes only take values different from \BG{} (Section~\ref{sec:decomposition}); since these accesses are atomic, each load returns one of the values actually stored in that byte \cite{ISOCPP20}, so the result of every such test is fixed by the input. (ii) Marks on rows $[s_k,e_k)$ of $B$, which only thread $k$ writes (Rule~1), so each load returns the last value that thread $k$ stored. (iii) Marks in its private row $P_k$. All the branches of Algorithm~\ref{alg:parallel_worker} depend only on these values, so, by induction on its steps, thread $k$ executes the same sequence of operations in every execution. Stages~2 and~3 are functions of the output of stage~1: in stage~2 every entry $D(C)$ receives a single possible value (Section~\ref{sec:hierarchy}), and stage~3 is sequential. $\square$

\begin{theorem}[Contours]\label{thm:main}
For every $K\ge 1$ and every schedule, stage~1 returns each contour of the Suzuki--Abe algorithm (P4) exactly once, with the same pixel sequence and start pixel, and no other contour.
\end{theorem}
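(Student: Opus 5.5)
The proof splits into soundness (every accepted trace is a Suzuki--Abe contour traced from its Suzuki--Abe start pixel), completeness (every Suzuki--Abe contour is attempted and accepted by some thread) and uniqueness. By Proposition~\ref{prop:det} and (P2), I can reason about each thread $k$ in isolation, as a sequential process reading fixed background/foreground values and its own marks. Two facts are used throughout.

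First, a trace started at $q$ with first examined neighbour $w$ follows $C(\comp(q),\comp(w))$ with exactly the moves of Suzuki--Abe, by (P2). When it is completed, it therefore yields the same pixel sequence as the Suzuki--Abe trace from the same start pixel and side, because the stopping criterion of Algorithm~\ref{alg:parallel_worker} is the equivalent formulation of Section~\ref{sec:suzuki_abe}.

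Second, a \emph{marking lemma}: a trace visiting a pixel of a 1-component $A$ starts at a pixel of $A$. An outer attempt whose left neighbour lies in a 0-component $Z$, or a hole attempt whose right neighbour lies in $Z$, is a pixel $\succeq p_Z$ (resp.\ $\succ h_Z$), since no pixel of $Z$ precedes $h_Z$. Hence, when thread $k$ scans $p_A$, no trace of thread $k$ has yet visited it, so $M(p_A)=\FG$. Likewise, by (P3), $M(p_Z)=\EAST$ could only come from a trace of $C_Z$, and such a trace starts at a pixel $\succeq p_Z$. Other threads never write rows $[s_k,e_k)$ or $P_k$ (Rule~1).

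\emph{Completeness.} Let $C_A$ have $r=\mathrm{row}(p_A)\in[s_k,e_k)$. By (P0) the left neighbour of $p_A$ is background, and by the marking lemma $M(p_A)=\FG$, so thread $k$ makes an outer attempt at $p_A$. The trace never goes above $r\ge s_k$, so it is not aborted. It completes with minimum row $r$ (P1), so it is accepted.

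For a hole $Z$, let $r=\mathrm{row}(p_Z)$, so that $r-1=\minrow(C_Z)\in[s_k,e_k)$ for exactly one $k$ and $r\in(s_k,e_k]$. The right neighbour $h_Z$ of $p_Z$ is background, the north-east neighbour is foreground by (P1), and $M(p_Z)\neq\EAST$ by the marking lemma. The hole attempt is therefore made, and by (P1) it is completed with top row $r-1$, so it is accepted. This shows that every contour of (P4) is emitted by the thread whose stripe contains its top row.

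\emph{Soundness and uniqueness.} Suppose an outer trace from $q$ on row $r$ is accepted, with $w$ the left neighbour of $q$ and $Z=\comp(w)$.
\begin{itemize}
\item If $Z$ were a hole, then $\minrow(C(A,Z))=\minrow(Z)-1\le r-1$, so the trace would be rejected. Hence the traced contour is $C_A$ and $r=\mathrm{row}(p_A)$.
\item If $q\neq p_A$, then $p_A\prec q$ on the same row. By completeness, thread $k$ has already completed the trace from $p_A$, which visited $q$ and wrote a mark $\neq\FG$ in $M(q)$ (row $r<e_k$), so no attempt is made at $q$.
\end{itemize}

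Suppose a hole trace from $q$ on row $r$ with right neighbour $w\in Z$ is accepted, so that its top row is $r-1$.
\begin{itemize}
\item If $Z$ surrounds $A$, then $\mathrm{row}(p_A)=r-1\in[s_k,e_k)$. The complete trace from $p_A$ precedes $q$ in the scan and, by (P3), marks $M(q)=\EAST$ (row $r\le e_k$, in $B$ or $P_k$), which blocks the attempt.
\item If $Z$ is a hole, then $\mathrm{row}(p_Z)=r$. If $q\neq p_Z$, then $p_Z\prec q$, and the complete trace from $p_Z$ marks $q$ as \EAST{} by (P3), again blocking the attempt.
\end{itemize}
So every accepted trace is a Suzuki--Abe contour started at its Suzuki--Abe start pixel. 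Since the acceptance rows force the owner thread to be unique, and each thread attempts each start pixel at most once with a given type, no contour is emitted twice.

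The main obstacle is the marking lemma in the presence of aborted and rejected traces and of the shared row $e_k$. I must check carefully that marks written to $P_k$ and to $B$ by failed traces can never turn $p_A$ non-\FG{} or $p_Z$ into \EAST{} before the scan reaches them. I also need that the blocking marks used in the soundness part are written by a trace of the same thread, which holds because $p_A$ and $p_Z$ lie in rows scanned by thread $k$ before $q$.
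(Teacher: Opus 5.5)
Your proposal is correct and follows the paper's proof essentially step for step: your marking lemma is its Step~1, your soundness argument (a non-start pixel is blocked by the mark left by the earlier complete trace from $p_A$ or $p_Z$, via (P3) for holes) is its Step~2, and your completeness and uniqueness parts are its Step~3. Fix two slips. First, in the marking lemma the ``resp.'' is swapped: an outer attempt whose left neighbour is in $Z$ starts $\succ h_Z$, and a hole attempt whose right neighbour is in $Z$ starts $\succeq p_Z$; the correct inequality is what excludes the outer attempt that TRUCO may make at $p_Z$ itself before its hole test. Second, in the outer soundness case, ``$Z$ a hole'' must mean a hole surrounded by $A$.
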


\emph{Proof.} By Proposition~\ref{prop:det}, each thread can be analysed as a sequential program that reads only its own marks. The proof has three steps: the start pixels of Suzuki--Abe are never blocked, no other start pixel is accepted, and each contour has exactly one owner. Fix thread $k$ with stripe $[s,e)$; every attempt it makes starts at a pixel that follows the start pixels of all its earlier attempts in raster order.

\emph{Step 1: the start pixels of (P4) are never blocked.} (a) If thread $k$ tests $p_A$, then $M(p_A)=\FG$: every earlier trace started at a pixel $q'\prec p_A$, which is not in $A$ because $p_A$ is the first pixel of $A$, and by (P2) it only visits pixels of $\comp(q')$. (b) If thread $k$ tests $p_Z$, then $M(p_Z)\neq\EAST$. An earlier trace may visit $p_Z$, but by (P3) it marks it as \EAST{} only if it follows a contour $C(\cdot,Z)$, that is, only if its first examined neighbour $w$ (the left neighbour of an outer attempt, the right neighbour of a hole attempt) is in $Z$. Since the attempt starts before $p_Z$, $w\preceq p_Z\prec h_Z$, which is impossible because $h_Z$ is the first pixel of $Z$. Moreover, the north-east neighbour of $p_Z$ is foreground (P1), so the hole attempt is made.

\emph{Step 2: only those start pixels are accepted.} Let an outer attempt at $q=(r,c)\in A$, whose left neighbour lies in $Z$, be accepted, so that its top row is $r$. If $Z$ were a hole of $A$, the trace would follow $C_Z$, whose top row is $\minrow(Z)-1<r$ (P1); hence $Z$ surrounds $A$, the trace follows $C_A$ and $\minrow(A)=r$. If $q\neq p_A$, the outer attempt at $p_A\prec q$, on the same row, was made earlier (Step~1), terminated without going above row $r$ (P2), and therefore was accepted and marked $q$, which would no longer be \FG{}: a contradiction.
Let a hole attempt at $q=(r,c)\in A$, whose right neighbour lies in $Z$, be accepted, so that its top row is $r-1\ge s$. If $Z$ surrounded $A$, then $\minrow(A)=r-1$, and the outer attempt at $p_A$, made earlier on row $r-1$, would have followed $C_A$ completely and marked $q$ as \EAST{} (P3), so the hole attempt at $q$ would not have been made. Hence $Z$ is a hole of $A$ and $\minrow(Z)=r$ (P1). If $q\neq p_Z$, the hole attempt at $p_Z\prec q$, on row $r$, was made earlier (Step~1), it followed $C_Z$ completely with top row $r-1$, and by (P3) it marked $q$ as \EAST{}: again a contradiction.
In both cases the accepted trace starts at the start pixel of (P4), with the same start type and first examined neighbour, so it has the same pixel sequence (P2).

\emph{Step 3: each contour is accepted by exactly one thread.} An accepted outer trace started on row $r$ has $\minrow=r\in[s,e)$, and an accepted hole trace started on row $r$ has $\minrow=r-1\in[s,e)$, so a contour can only be accepted by the thread whose stripe contains its top row, and by Step~2 that thread accepts it at most once. Conversely, the start pixel of a contour whose top row is in $[s,e)$ is on a row scanned by thread $k$ for its type (P1); the attempt is made (Step~1), terminates (P2) and never goes above the top row, so it is accepted. Since the stripes partition the rows, every contour is returned exactly once. $\square$

\begin{theorem}[Hierarchy]\label{thm:hier}
The output of stage~3 is the sequence of contours of Suzuki--Abe in scan order, together with the hierarchy $\pi$ of Section~\ref{sec:suzuki_abe}.
\end{theorem}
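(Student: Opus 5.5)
By Theorem~\ref{thm:main}, stage~1 returns exactly the contours of (P4), each once, with its start pixel, its pixel sequence and its type. The type is the type of the contour, because by Step~2 an outer attempt is accepted only on an outer contour and a hole attempt only on a hole contour. The proof of Theorem~\ref{thm:hier} then has three parts: the order of the output, the correctness of stage~2, and the correctness of the parents computed in stage~3.

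\emph{Order.} Each thread makes its attempts in raster order of start pixels, so every list $\mathcal{C}_k$ is sorted in scan order. Thread $k$ accepts contours whose start pixels lie on rows $[s_k,e_k]$: outer starts on $[s_k,e_k)$ and hole starts on $(s_k,e_k]$. Two consecutive lists can therefore interleave only on the shared row $e_k$, and contours of non-consecutive lists cannot interleave at all. Stage~3 sorts the contours that start on the rows $e_0,\dots,e_{K-2}$. Since scan order is a total order (P1), the concatenated list with these rows sorted is exactly the scan order of (P4).

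\emph{Stage 2.} Fix a contour $C$ with $\rho(C)=(q,\sigma)\neq\bot$, and let $w$ be the neighbour of $q$ on side $\sigma$. I must show that stage~2 assigns to $C$ exactly $D(C)$.
\begin{itemize}
  \item \textbf{$D(C)$ is found.} $w\in\comp(w)$ is 4-adjacent to $q\in D(C)$. By (P2), $w$ is examined during some visit of $q$ by the trace of $D(C)$, and that trace is output by Theorem~\ref{thm:main}.
  \item \textbf{No other contour is found.} If a contour $D'=C(A',Z')$ examines $w$ during a visit of $q$, then $q\in A'$ and, by (P2), $w\in Z'$. Hence $A'=\comp(q)$ and $Z'=\comp(w)$, so $D'=D(C)$ by uniqueness of the contour between two components.
  \item \textbf{The examined sides are computed correctly.} The sides examined at each visit are recovered from the consecutive points through the range $[d_{in}+5,d_{out})$. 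At the start pixel the first visit uses $[\sigma_0,d_{out})$ and the closing visit uses the general range, as in the formulation of Section~\ref{sec:suzuki_abe}. A single-pixel contour examines both sides.
  \item \textbf{The result is well defined.} Each pair $(q,\sigma)$ is the reference of at most one contour, and every write to the entry of $C$ carries the same value. The result is therefore well defined regardless of block order, which is consistent with Proposition~\ref{prop:det}.
\end{itemize}

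\emph{Parents.} Stage~3 applies \eqref{eq:parent} in scan order. I argue by strong induction along scan order. Section~\ref{sec:hierarchy} showed that \eqref{eq:parent} holds for the true $\pi$, and that $D(C)$ precedes $C$. So when $C$ is processed, $\pi(D(C))$ has already been computed correctly by the induction hypothesis. The types of $C$ and $D(C)$ are correct by the first paragraph, and $D(C)$ is correct by stage~2, hence the computed $\pi(C)$ equals the true $\pi(C)$. The base case is the contours with $\rho(C)=\bot$, which are roots by the argument of Section~\ref{sec:hierarchy}.

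The main obstacle is the stage~2 claim that the side recovery from the stored point sequence matches exactly the neighbourhoods examined by the tracer. The delicate cases are the start pixel, visited twice, and the single-pixel contours. I would handle this by comparing directly with the \textsc{East} computation of Algorithm~\ref{alg:parallel_worker}, which uses the same ranges.
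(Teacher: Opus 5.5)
Your proposal is correct and follows the paper's proof. Theorem~\ref{thm:main} and (P2) guarantee that $D(C)$ is present and is the only contour found for $\rho(C)$ in stage~2; \eqref{eq:parent} is applied in scan order, using that $D(C)$ precedes $C$; and since the start pixels are those of Suzuki--Abe, sorting by them gives the scan order. You simply make explicit several points the paper leaves implicit: the induction along scan order, the correctness of the recorded types (from Step~2), and why sorting only the contours that start on the rows $e_k$ suffices.
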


\emph{Proof.} By Theorem~\ref{thm:main} every contour is traced exactly once, so every referenced contour $D(C)$ is present and, by (P2), it is the only contour found for the reference $\rho(C)$ in stage~2. Equation~\eqref{eq:parent} gives the parent of every contour from its referenced contour, and the contours are visited in an order in which $\pi(D(C))$ is already known. Finally, the start pixels are those of Suzuki--Abe, so sorting by them gives the scan order. $\square$

Since, by Theorems~\ref{thm:main} and~\ref{thm:hier}, the output equals that of Suzuki--Abe, it does not depend on $K$ either.

\section{Experimental Details}
\label{app:experiments}

\paragraph{Hardware and system}
The i7-11700F has 62\,GB of RAM and runs Ubuntu 20.04.6 LTS; the benchmark runs on cores 1--7 (logical CPUs 1--7 and 9--15), and every other process is restricted to core 0 by setting its CPU affinity. The Xeon Silver 4510 has 251\,GB of RAM and runs Ubuntu 24.04.5 LTS; the benchmark runs on the cores and the memory of NUMA node~1, and every other process is restricted to node~0 through cgroup CPU sets. On both machines Turbo Boost is disabled (\texttt{intel\_pstate/no\_turbo}), the \texttt{performance} governor is selected and the frequency of every core is fixed at its base value, 2.5 and 2.4\,GHz. A script applies these settings and restores the original configuration at the end. The benchmark process runs under \texttt{SCHED\_FIFO} with priority 99; the single-threaded executions are pinned to one reserved core (logical CPU 2 and 12) with \texttt{pthread\_setaffinity\_np}, and in the multithreaded executions the operating system distributes the threads over the reserved cores.

\paragraph{Software}
All code is compiled in C++20 mode with \texttt{-O3 -DNDEBUG} and no \texttt{-march} option (GCC~13.1 on the i7-11700F and 13.3 on the Xeon) and linked against the official release of OpenCV~4.13.0, built in release mode with the \texttt{pthreads} parallel framework and the default CPU baseline (SSE3), with runtime dispatch of SSE4.1 to AVX-512 kernels. The run detection of TRUCO uses OpenCV universal intrinsics, which with this baseline are 128-bit vectors of 16 one-byte lanes. The atomic accesses of Section~\ref{sec:memory} use \texttt{std::atomic\_ref<unsigned char>} with \texttt{memory\_order\_relaxed} on the byte buffer of the image, and the code checks at compile time that they are lock-free.

\paragraph{Timing}
For the OpenCV methods, the input copy that they receive is made before the timer starts. With the hierarchy, the time of each stage of TRUCO is taken from the fastest of the 20 executions, so that the stages add up to the reported total. OpenCV \texttt{RETR\_TREE}, which is much slower on Liu4K, is measured once per round on every tenth Liu4K image (93 images). The resized Liu4K images are stored as JPEG with quality 95, the default of OpenCV.

\paragraph{Validation}
The topological comparison requires the same contours, each rotated to a canonical start (the smallest rotation starting at its raster-first point) and with its orientation, and the same parent for every contour; a contour returned twice is reported as an extra contour and a contour not returned as a missing one. Isolated foreground pixels are single-point outer contours, as in Suzuki--Abe. Table~\ref{tab:validation} gives the number of executions. The ThreadSanitizer runs used GCC~13.4 with \texttt{-fsanitize=thread} for TRUCO and OpenCV~4.13.0, on 3000 random images with 1 to 8 threads and on the two stress images, the 40 synthetic images at 1\,MP and 4 of them at 16\,MP, 47 Liu4K images at 1\,MP and 18 Inria masks with 1 to 20 threads, in the two modes. Two races inside the thread pool of OpenCV, between the pool and its workers, are suppressed; the suppressions match only when the top frame of the race is in the OpenCV pool, so a race in the code of TRUCO would still be reported.

\begin{table*}[!htbp]
\centering
\caption{Validation of TRUCO against \texttt{cv::findContours} (\texttt{CHAIN\_APPROX\_NONE}) in the modes \texttt{RETR\_TREE} and \texttt{RETR\_LIST}. Every image is processed in the two modes with every thread count of the machine (1--16 on the i7-11700F, 1--24 on the Xeon Silver 4510, 1--80 on the $2\times$ Xeon Silver 4316) and the given number of repetitions per thread count. Different: executions with an exact or a topological difference, on any machine.}
\label{tab:validation}
\begin{tabular}{lrr|rrr|r}
\toprule
 & & & \multicolumn{3}{c|}{\textbf{Executions}} & \\
\textbf{Images} & \textbf{\#Images} & \textbf{Rep.} & \textbf{i7-11700F} & \textbf{Xeon 4510} & \textbf{$2\times$Xeon 4316} & \textbf{Different} \\
\midrule
Random, up to $120\times120$      & 20\,000 & 1 & 640\,000 & 960\,000 & 3\,200\,000 & 0 \\
Stress, 16\,MP                    & 2       & 3 & 192      & 288         & 960      & 0 \\
Synthetic, 1--16\,MP (5 res.)     & 200     & 3 & 19\,200  & 28\,800     & 96\,000    & 0 \\
Liu4K, 1--16\,MP (5 res.)$^\ast$  & 465     & 1 & 14\,880  & 22\,320     & 74\,400    & 0 \\
Inria, 25\,MP                     & 180     & 3 & 17\,280  & 25\,920     & 86\,400    & 0 \\
\midrule
Total                             &         &   & 691\,552 & 1\,037\,328 & 3\,457\,760 & 0 \\
\bottomrule
\multicolumn{7}{l}{\footnotesize $^\ast$Every tenth image of the dataset (93 images) at each resolution.}
\end{tabular}
\end{table*}

\paragraph{Times at every resolution}
Tables~\ref{tab:synthetic} and~\ref{tab:liu4k} give the times of every method at every resolution. OpenCV \texttt{RETR\_TREE} takes 1.05--2.9 times the time of our Suzuki--Abe on the synthetic images, a factor that decreases with the resolution, and 2.6--19.5 times on Liu4K, where it grows with the resolution, so the overhead of its hierarchy depends strongly on the image content. Figure~\ref{fig:synthetic_density} shows the effect of the object density on the synthetic images at 16\,MP.

\begin{table*}[!htbp]
\centering
\caption{Synthetic dataset: median time per image (ms). For TRUCO best, the median speedup with respect to OpenCV (without hierarchy) or to Suzuki--Abe (S--A, with hierarchy) is given in parentheses. Other columns as in Table~\ref{tab:summary}.}
\label{tab:synthetic}
\resizebox{\textwidth}{!}{%
\input{tables/times_synthetic.tex}}
\end{table*}

\begin{table*}[!htbp]
\centering
\caption{Liu4K dataset (924 images): median time per image (ms). Columns as in Table~\ref{tab:synthetic}; $^\dagger$measured on every tenth image.}
\label{tab:liu4k}
\resizebox{\textwidth}{!}{%
\input{tables/times_liu4k.tex}}
\end{table*}

\begin{figure*}[!htbp]
  \centering
  \includegraphics[width=\textwidth]{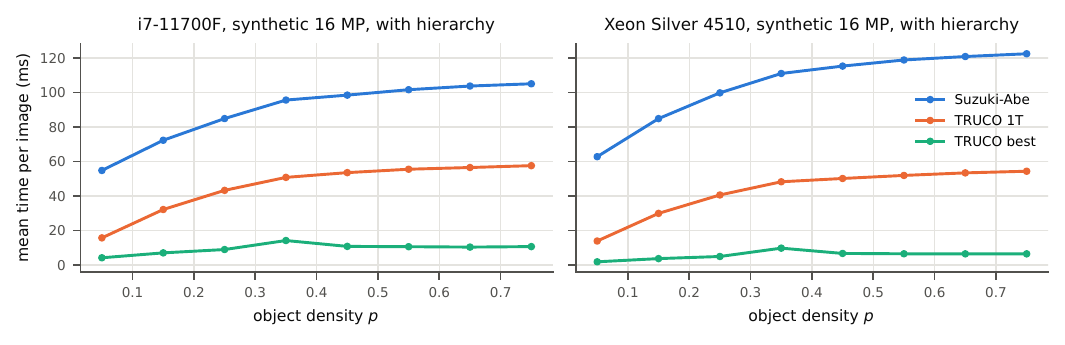}
  \caption{Synthetic dataset, 16\,MP, with hierarchy: mean time per image as a function of the object density $p$ (5 images per value).}
  \label{fig:synthetic_density}
\end{figure*}

\clearpage
\bibliographystyle{elsarticle-num}
\bibliography{bibliography}

\end{document}

%% file: tables/summary.tex
\begin{tabular}{ll|cccc|cccc}
\toprule
 & & \multicolumn{4}{c|}{\textbf{Without hierarchy}} & \multicolumn{4}{c}{\textbf{With hierarchy}} \\
\textbf{Dataset} & & OpenCV & RLE & TRUCO 1T & TRUCO best & OpenCV & S--A & TRUCO 1T & TRUCO best \\
\midrule
\multicolumn{10}{l}{\textit{i7-11700F}} \\
Synthetic, 16\,MP & time (ms) & 63.7 & 56.1 & 41.4 & 6.79 & 139.7 & 97.0 & 52.3 & 10.4 \\
 & speedup 1T / best & 1.6 / 9.4 & 1.3 / 7.9 &  & (5.9) & 2.6 / 12.6 & 1.9 / 9.8 &  & (4.9) \\
Liu4K, 16\,MP & time (ms) & 141.0 & 111.6 & 86.3 & 15.7 & 2398.3$^\dagger$ & 149.5 & 112.6 & 26.6 \\
 & speedup 1T / best & 1.6 / 9.1 & 1.3 / 7.1 &  & (5.5) & 26.6 / 115.8 & 1.3 / 5.5 &  & (4.1) \\
Inria, 25\,MP & time (ms) & 17.1 & 6.27 & 9.23 & 2.72 & 17.7 & 69.4 & 11.2 & 3.83 \\
 & speedup 1T / best & 1.5 / 6.3 & 0.7 / 2.3 &  & (3.4) & 1.6 / 4.7 & 6.2 / 18.0 &  & (2.9) \\
\midrule
\multicolumn{10}{l}{\textit{Xeon Silver 4510}} \\
Synthetic, 16\,MP & time (ms) & 58.3 & 62.1 & 37.7 & 3.47 & 120.4 & 113.3 & 49.2 & 6.38 \\
 & speedup 1T / best & 1.6 / 18.8 & 1.5 / 17.8 &  & (12.1) & 2.4 / 17.4 & 2.3 / 19.0 &  & (8.1) \\
Liu4K, 16\,MP & time (ms) & 140.0 & 133.7 & 87.4 & 10.6 & 2610.4$^\dagger$ & 169.5 & 114.6 & 21.8 \\
 & speedup 1T / best & 1.6 / 14.1 & 1.5 / 13.0 &  & (8.7) & 29.8 / 144.7 & 1.5 / 7.7 &  & (5.1) \\
Inria, 25\,MP & time (ms) & 17.3 & 6.75 & 9.00 & 0.95 & 18.3 & 88.6 & 11.3 & 2.03 \\
 & speedup 1T / best & 2.1 / 18.5 & 0.8 / 7.5 &  & (9.9) & 1.8 / 10.0 & 7.7 / 43.3 &  & (5.7) \\
\bottomrule
\end{tabular}

%% file: tables/ablation.tex
\begin{tabular}{l|c|ccc|cc}
\toprule
 & & \multicolumn{3}{c|}{\textbf{TRUCO 1T, without hierarchy}} & \multicolumn{2}{c}{\textbf{With hierarchy}} \\
\textbf{Images} & S--A & 32-bit, scalar & 8-bit, scalar & 8-bit, vector & OpenCV & TRUCO-H 1T \\
\midrule
\multicolumn{7}{l}{\textit{i7-11700F}} \\
Synthetic, 1\,MP & 11.1 & 8.03 & 7.80 & 6.66 & 35.0 & 9.20 \\
Synthetic, 16\,MP & 94.9 & 80.5 & 51.8 & 40.0 & 139.7 & 52.3 \\
Liu4K, 1\,MP & 11.4 & 8.76 & 8.52 & 7.35 & 40.6$^\dagger$ & 9.94 \\
Liu4K, 16\,MP & 144.3 & 129.6 & 101.3 & 85.7 & 2398.3$^\dagger$ & 112.6 \\
Inria, 25\,MP & 69.6 & 57.0 & 21.5 & 9.21 & 17.7 & 11.2 \\
\midrule
\multicolumn{7}{l}{\textit{Xeon Silver 4510}} \\
Synthetic, 1\,MP & 11.9 & 7.96 & 7.86 & 6.62 & 28.3 & 8.77 \\
Synthetic, 16\,MP & 114.5 & 91.2 & 52.0 & 37.9 & 120.4 & 49.2 \\
Liu4K, 1\,MP & 12.1 & 8.91 & 8.74 & 7.45 & 34.8$^\dagger$ & 9.90 \\
Liu4K, 16\,MP & 165.5 & 146.2 & 105.2 & 87.3 & 2610.4$^\dagger$ & 114.6 \\
Inria, 25\,MP & 89.6 & 80.8 & 19.0 & 8.91 & 18.3 & 11.3 \\
\bottomrule
\end{tabular}

%% file: tables/times_synthetic.tex
\begin{tabular}{l|ccccc|cccc}
\toprule
 & \multicolumn{5}{c|}{\textbf{Without hierarchy}} & \multicolumn{4}{c}{\textbf{With hierarchy}} \\
\textbf{Res.} & OpenCV & RLE & Garcia & TRUCO 1T & TRUCO best & S--A & OpenCV & TRUCO 1T & TRUCO best \\
\midrule
\multicolumn{10}{l}{\textit{i7-11700F}} \\
16\,MP & 63.7 & 56.1 & 132.6 & 41.4 & 6.79 ($\times$9.4) & 97.0 & 139.7 & 52.3 & 10.4 ($\times$9.8) \\
8\,MP & 40.7 & 29.3 & 77.5 & 26.4 & 3.82 ($\times$11.0) & 47.8 & 94.5 & 33.0 & 6.45 ($\times$7.7) \\
4\,MP & 23.5 & 16.4 & 51.1 & 15.1 & 2.17 ($\times$11.5) & 26.9 & 64.6 & 19.1 & 3.89 ($\times$7.2) \\
2\,MP & 15.9 & 10.4 & 38.9 & 9.92 & 1.35 ($\times$11.8) & 16.9 & 46.7 & 12.7 & 2.52 ($\times$6.8) \\
1\,MP & 11.5 & 7.29 & 34.3 & 6.91 & 1.03 ($\times$11.1) & 11.3 & 35.0 & 9.20 & 2.07 ($\times$5.5) \\
\midrule
\multicolumn{10}{l}{\textit{Xeon Silver 4510}} \\
16\,MP & 58.3 & 62.1 & 103.6 & 37.7 & 3.47 ($\times$18.8) & 113.3 & 120.4 & 49.2 & 6.38 ($\times$19.0) \\
8\,MP & 37.5 & 27.6 & 61.8 & 24.9 & 2.19 ($\times$18.3) & 52.3 & 79.8 & 30.9 & 4.01 ($\times$13.5) \\
4\,MP & 21.6 & 15.6 & 43.2 & 14.3 & 1.36 ($\times$16.8) & 27.7 & 52.1 & 18.1 & 2.56 ($\times$11.4) \\
2\,MP & 14.5 & 10.0 & 33.8 & 9.39 & 0.91 ($\times$15.8) & 17.8 & 37.1 & 12.1 & 1.98 ($\times$9.4) \\
1\,MP & 10.4 & 7.21 & 30.1 & 6.64 & 0.82 ($\times$13.4) & 12.3 & 28.3 & 8.77 & 1.67 ($\times$7.4) \\
\bottomrule
\end{tabular}

%% file: tables/times_liu4k.tex
\begin{tabular}{l|cccc|cccc}
\toprule
 & \multicolumn{4}{c|}{\textbf{Without hierarchy}} & \multicolumn{4}{c}{\textbf{With hierarchy}} \\
\textbf{Res.} & OpenCV & RLE & TRUCO 1T & TRUCO best & S--A & OpenCV & TRUCO 1T & TRUCO best \\
\midrule
\multicolumn{9}{l}{\textit{i7-11700F}} \\
16\,MP & 141.0 & 111.6 & 86.3 & 15.7 ($\times$9.1) & 149.5 & 2398.3$^\dagger$ & 112.6 & 26.6 ($\times$5.5) \\
8\,MP & 70.9 & 50.3 & 43.8 & 8.28 ($\times$8.6) & 67.2 & 679.1$^\dagger$ & 57.4 & 14.4 ($\times$4.5) \\
4\,MP & 39.3 & 27.8 & 23.9 & 4.88 ($\times$8.1) & 37.4 & 299.0$^\dagger$ & 32.1 & 8.72 ($\times$4.2) \\
2\,MP & 22.7 & 15.7 & 13.9 & 2.97 ($\times$7.7) & 21.4 & 114.4$^\dagger$ & 18.6 & 5.38 ($\times$3.9) \\
1\,MP & 12.0 & 7.85 & 7.50 & 1.73 ($\times$7.0) & 11.8 & 40.6$^\dagger$ & 9.94 & 3.10 ($\times$3.8) \\
\midrule
\multicolumn{9}{l}{\textit{Xeon Silver 4510}} \\
16\,MP & 140.0 & 133.7 & 87.4 & 10.6 ($\times$14.1) & 169.5 & 2610.4$^\dagger$ & 114.6 & 21.8 ($\times$7.7) \\
8\,MP & 69.5 & 54.7 & 43.7 & 5.38 ($\times$12.7) & 73.2 & 640.7$^\dagger$ & 57.2 & 11.1 ($\times$6.3) \\
4\,MP & 38.0 & 27.9 & 23.8 & 3.40 ($\times$11.2) & 38.9 & 270.5$^\dagger$ & 31.6 & 6.72 ($\times$5.6) \\
2\,MP & 21.8 & 15.7 & 13.8 & 2.20 ($\times$9.8) & 22.6 & 101.2$^\dagger$ & 18.5 & 4.39 ($\times$5.0) \\
1\,MP & 11.5 & 8.09 & 7.45 & 1.35 ($\times$8.4) & 12.6 & 34.8$^\dagger$ & 9.90 & 2.68 ($\times$4.7) \\
\bottomrule
\end{tabular}